\documentclass{article} % For LaTeX2e
\usepackage{iclr2027_conference,times}
\usepackage{amsmath,amsfonts,bm}

\def\eqref#1{equation~\ref{#1}}
\def\1{\bm{1}}

\DeclareMathAlphabet{\mathsfit}{\encodingdefault}{\sfdefault}{m}{sl}
\SetMathAlphabet{\mathsfit}{bold}{\encodingdefault}{\sfdefault}{bx}{n}

\usepackage{hyperref}
\usepackage{url}
\usepackage{amsthm}
\usepackage{algorithmic} 
\usepackage{enumitem}
\usepackage{placeins}
\usepackage{booktabs}
\usepackage{graphicx}
\usepackage{amssymb}
\usepackage{caption}
\usepackage{subcaption}
\usepackage[ruled]{algorithm2e}
\usepackage{titletoc}
\usepackage{natbib}
\usepackage{xcolor}

\newtheorem{theorem}{Theorem}
\newtheorem{proposition}[theorem]{Proposition}
\newtheorem{corollary}[theorem]{Corollary}
\newtheorem{assumption}{Assumption}

\newtheorem{lemma}{Lemma}

\ifodd 1
\else

\fi

\title{CFLoRA: Federated Fine-tuning of LLMs with Complementary Factors for Error-free Aggregation}

\author{Yanan Ma \\
City University of Hong Kong\\
\texttt{yananma8-c@my.cityu.edu.hk} 
\And
Qiyuan Chen \\
The University of Hong Kong \\
\texttt{qiyuanchen@connect.hku.hk} 
\And
Zihan Fang \\
City University of Hong Kong\\
\texttt{zihanfang3-c@my.cityu.edu.hk} 
\And
Xianhao Chen \\
The University of Hong Kong \\
\texttt{xchen@eee.hku.hk} 
\And
Yuguang Fang \\
City University of Hong Kong\\
\texttt{my.fang@cityu.edu.hk}
}

\iclrfinalcopy % Uncomment for camera-ready version, but NOT for submission.
\begin{document}

\maketitle

\begin{abstract}
Federated low-rank adaptation (LoRA) enables collaborative fine-tuning of large language models without centralizing private client data. Its factorized update, however, creates a structural mismatch in federated averaging: averaging the two LoRA factors separately does not equal averaging their products. Existing exact methods resolve this issue mainly by freezing an entire factor or alternating factors across rounds, but none can update factors simultaneously without aggregation errors or expanding communication ranks. To address this fundamental problem, we present \texttt{CFLoRA}, a federated LoRA scheme that partitions latent LoRA channels into two complementary sets in every communication round. By ensuring that columns and rows are complementary across factors, we eliminate bilinear terms in matrix multiplications, making federated aggregation exact. Crucially, our framework also supports clients with heterogeneous rank budgets. Convergence analysis validates \texttt{CFLoRA} achieves $\mathcal{O}(1/\sqrt{T})$ convergence rate of the \textit{original} LoRA objective in homogeneous-rank cases. 
% Extensive experiments validate that \texttt{CFLoRA} achieves exact superior performance and training efficiency compared to state-of-the-art federated LoRA baselines.
Extensive experiments with RoBERTa on the GLUE benchmark and with LLaMA-3.2-3B-Instruct on commonsense reasoning tasks demonstrate that \texttt{CFLoRA} achieves superior performance and training efficiency compared to state-of-the-art federated LoRA baselines.
\end{abstract}

% \section{Introduction}
% \label{sec:introduction}

\section{Introduction}
\label{sec:introduction}

%Adapting large language models (LLMs) to private data . 
Federated learning (FL) enables clients to learn from private, distributed data without transferring raw data to a central server \citep{mcmahan2017communication,kairouz2021advances}, which is important for privacy-sensitive applications. However, full-parameter fine-tuning imposes substantial memory, computation, and communication costs on participating clients. Low-rank adaptation (LoRA) reduces these costs by freezing pretrained weights and representing each trainable weight update as the product of two matrices \citep{hu2022lora}. Federated LoRA, which integrates FL with LoRA, therefore offers a practical approach to distributed LLM adaptation subject to clients' resource constraints~\citep{zhang2024federatedgpt}.

Despite these benefits, it is well-known that LoRA's factorized parameterization introduces a fundamental mismatch in federated aggregation. For client factors $(\mathbf{B}_i,\mathbf{A}_i)$ and normalized aggregation weights $w_i$, in general,
\begin{equation}
  \left(\sum_i w_i\mathbf{B}_i\right)
\left(\sum_i w_i\mathbf{A}_i\right)
\neq
\sum_i w_i\mathbf{B}_i\mathbf{A}_i.  
\end{equation}
%Averaging the factors separately introduces cross-client products that are absent from the desired average of client adapters. 
The resulting aggregation error can impair federated optimization, particularly when heterogeneous data drives client factors apart \citep{sun2024ffa}. A detailed formulation and analysis is provided in Section \ref{subsec:aggregation_error}.

Existing approaches navigate this mismatch through various trade-offs. Freezing one factor throughout training or alternating entire factors across rounds ensures exact aggregation, but prevents simultaneous adaptation within a single round \citep{sun2024ffa,chen2025rolora,koo2025loraa2}. Restricting updates in this way limits the model's expressive capacity and slows convergence, as both matrices are supposed to co-adapt to capture knowledge. Alternatively, stacking client adapters preserves exact aggregation but undesirably expands the rank to the sum of all client ranks~\citep{wang2024flora}. Other methods adopt approximate decompositions~\citep{bai2024flexlora,bian2025lorafair,wang2026iflora}, incorporate residual corrections into the pretrained weights~\citep{singhal2025fedex}, or utilize alternative parameterizations with fixed outer bases~\citep{raje2025ravan}. However, these either only approximate or diverge from the original LoRA learning objective. These limitations raise a still unresolved question: \emph{Can two-factor federated LoRA achieve exact factor-wise aggregation and distribution while simultaneously updating both factors?}

We answer this question with \emph{complementary-channel federated LoRA} (\texttt{CFLoRA}). Our central insight is that exact aggregation can be enforced by a simple modification to LoRA: in every round, if a column of $\mathbf B$ is trainable, its corresponding row in $\mathbf A$ can be frozen, and vice versa. This mutually exclusive update ensures $\Delta\mathbf B_i\Delta\mathbf A_j=\mathbf 0$ for every client pair $(i,j)$, thus leading to exact factor-wise aggregation. \texttt{CFLoRA} can be treated as a generalized version of alternative factor updates, but enables simultaneous factor updating with flexible choices of trainable channels. Compared with standard federated LoRA, our scheme \textit{halves trainable parameters} and \textit{lowers communication costs}. More importantly, benefiting from exact aggregation and the co-adaptation of both factors, we show that \texttt{CFLoRA} achieves an $\mathcal{O}(1/\sqrt{T})$ convergence rate for the \textit{original} LoRA objective under homogeneous client ranks and a surrogate objective under heterogeneous ranks.

%We answer this question with \emph{complementary-channel federated LoRA} (\texttt{CFLoRA}). Our central insight is that exact aggregation can be enforced by a simple modification to LoRA: in every round, if a column of $\mathbf B$ is trainable, its corresponding row in $\mathbf A$ is frozen, and vice versa. This mutually exclusive update ensures $\Delta\mathbf B_i\Delta\mathbf A_j=\mathbf 0$ for every client pair $(i,j)$, thus leading to exact factor-wise aggregation. \texttt{CFLoRA} can be viewed as a generalization of alternating factor updates, but crucially, it enables simultaneous factor adaptation with flexible choices of trainable channels. Compared with standard federated LoRA, our scheme \textit{halves trainable parameters} and \textit{lowers communication costs} while adding almost no additional complexity (except for column/row localization). Benefiting from exact aggregation and the co-adaptation of both factors, \texttt{CFLoRA} achieves an $\mathcal{O}(1/\sqrt{T})$ convergence rate in both settings—optimizing the \textit{original} LoRA objective under homogeneous client ranks, and a well-defined \textit{surrogate} objective under heterogeneous ranks.

Our main contributions are as follows:
\begin{itemize}
    \item We introduce \texttt{CFLoRA}, a federated LoRA fine-tuning scheme with complementary factors that ensure exact factor-wise aggregation and seamlessly support heterogeneous ranks.

    \item 
    We establish an $\mathcal{O}(1/\sqrt{T})$ convergence rate for \texttt{CFLoRA} under both homogeneous and heterogeneous rank settings, and theoretically reveals the importance of factor co-adaptation.

    \item 
    We demonstrate the performance superiority and resource efficiency of \texttt{CFLoRA} over other federated LoRA baselines by fine-tuning RoBERTa and LLaMA-3.2-3B-Instruct models on GLUE and commonsense reasoning tasks, respectively.  
\end{itemize}

%We conduct extensive experiments across multiple datasets and LLM models with diverse parameter settings, demonstrating \texttt{CFLoRA}’s superior performance compared to various baselines in accuracy and resource utilization.
%Due to this design
% While the server maintains a global adapter of rank $R$, client $i$ can receive and activate only a subset of $r_i$ channels from the shared complementary assignments.

\section{Related Work}
\label{sec:related}

\paragraph{Federated LoRA Fine-tuning.} Research on Federated LoRA investigated how to preserve the parameter efficiency of low-rank adaptation while aggregating updates learned from decentralized data. FedIT directly combines LoRA with federated averaging and communicates both factors \citep{zhang2024federatedgpt}; this simple baseline produces an error between the product of factor averages and the desired average of products. FedEx-LoRA computes the aggregation error and absorbs this residual into the nominally frozen backbone \citep{singhal2025fedex}. However, this residual has a rank proportional to the aggregated rank and changes the base model across rounds. LoRA-FAIR optimizes a correction to an averaged factor against the product-space target, whereas iFLoRA reconstructs local products and applies SVD in a pipelined aggregation procedure \citep{bian2025lorafair,wang2026iflora}. However, these schemes introduce server-side reconstruction complexity (e.g., SVD of high-dimensional models) without entirely avoiding approximation errors.

Beyond error correction, another line of work modifies the factor-updating rule. FFA-LoRA fixes the randomly initialized $\mathbf A$ and updates only the zero-initialized $\mathbf B$~ \citep{sun2024ffa}. While the aggregation is exact, it introduces a permanently frozen factor that restricts the learned subspace. FedSA-LoRA instead shares only the factor associated with general knowledge and retains the other factor locally for personalization \citep{guo2025selective}. RoLoRA alternates the updated factor across communication rounds~\citep{chen2025rolora}. AS-LoRA \citep{kim2026adaptive} extends the alternating-update idea to the layer level, allowing each layer to select one factor to update at a time. These alternating schemes eliminate aggregation errors, yet still fail to update $\mathbf A$ and $\mathbf B$ concurrently. Consequently, none of the aforementioned works can \textit{simultaneously update both factors within the same round with exact factor-wise aggregation and distribution.}

\paragraph{Federated LoRA with Heterogeneous Ranks.}
Rank-heterogeneous federated LoRA assigns client-specific adapter capacity to accommodate differences in memory, computation, communication, and task complexity~\citep{fang2025automated}. Several approaches address this through padding or sampling mechanisms~\citep{cho2024hetlora,byun2025rank}. Alternatively, FSLoRA samples submatrices from a server-side LoRA model based on client-specific sketching ratios and provides a convergence analysis~\citep{fang2026fslora}. Another line of work relies on product-space decomposition. FlexLoRA constructs a weighted product-space aggregate and employs SVD to distribute rank-specific adapters to clients~\citep{bai2024flexlora}. FedPA-LoRA applies SVD solely on a small core matrix obtained via reduced QR factorization~\citep{jeon2026fedpa}. Nevertheless, repeated model decompositions incur significant overhead, and crucially, these schemes fail to preserve exact factor-wise aggregation. To preserve exact aggregation, FLoRA stacks heterogeneous client factors so that their product equals the sum of local products \citep{wang2024flora}. 
GLoRA aggregates factors in consensus coordinates~\citep{chen2026glora}. However, to eliminate aggregation error, FLoRA requires a broadcast rank of $\sum_i r_i$, and GLoRA also requires the server rank to cover the union span across clients. Other works achieve exact aggregation, yet with alternative updates or different parameterizations (e.g., updating only middle matrices)~\citep{raje2025ravan,koo2025loraa2,meng2026florg}. In a nutshell,  none of the existing FedLoRA methods, which update $\mathbf A$ and $\mathbf B$ concurrently, can support heterogeneous client ranks with exact factor-wise aggregation and distributions.

\section{Preliminaries and Problem Formulation}
\label{sec:preliminaries}

\subsection{Federated LoRA Fine-Tuning}
We consider a federated LoRA fine-tuning system comprising $N$ clients, which can be formulated as follows:
\begin{equation}
F(\mathbf{B}, \mathbf{A}) = \sum_{i} w_i f_i(\mathbf{B}, \mathbf{A}), \qquad
f_i(\mathbf{B}, \mathbf{A}) = \mathbb{E}_{\xi \sim \mathcal{D}_i}\!\left[\ell(\mathbf{W}_0 + \mathbf{B}\mathbf{A}; \xi)\right],
\label{eq:objective}
\end{equation}
where $\ell$, $f_i$, and $F$ denote the sample loss, the local loss for client $i$, and the global loss, respectively. $\xi$ represents a data sample drawn from the local dataset $\mathcal{D}_i$ on client $i$, and $w_i > 0$ is the weight assigned to client $i$ with $\sum_{i} w_i = 1$. Furthermore, $\mathbf{W}_0 \in \mathbb{R}^{d_o \times d_i}$ denotes the frozen base model, while $\mathbf{B} \in \mathbb{R}^{d_o \times R}$ and $\mathbf{A} \in \mathbb{R}^{R \times d_i}$ are the trainable rank-$R$ LoRA modules. 
% Because this objective aligns with standard federated optimization frameworks, it can be readily solved using the FedAvg algorithm.

\subsection{Failures in Factor-Wise Aggregation}
\label{subsec:aggregation_error}
A fundamental challenge in LoRA-based federated fine-tuning is that averaging and distributing $\mathbf{A}$ and $\mathbf{B}$ \textit{separately} following FedAvg \citep{mcmahan2017communication,sun2024ffa, chen2025rolora} is not equal to averaging and distributing their products. Formally, suppose $\mathbf{B}$ and $\mathbf{A}$ denote the global factors at the beginning of the round. Each client $i$ produces local endpoints 
$\mathbf{B}_i = \mathbf{B} + \Delta\mathbf{B}_i$, $\mathbf{A}_i = \mathbf{A} + \Delta\mathbf{A}_i$. Factor-wise averaging across $N$ clients yields the update:
\begin{equation}
\begin{aligned}
&\left(\mathbf{B} + \sum_{i}w_i\Delta\mathbf{B}_i\right)\left(\mathbf{A} + \sum_{i}w_i\Delta\mathbf{A}_i\right) \\
= &\mathbf{B}\mathbf{A} + \sum_{i}w_i\Delta\mathbf{B}_i \mathbf{A} + \mathbf{B} \sum_{i}w_i\Delta\mathbf{A}_i +\underbrace{\left(\sum_{i}w_i\Delta\mathbf{B}_i\right)\left(\sum_{i}w_i\Delta\mathbf{A}_i\right)}_{\text{cross-client term}}, \label{eq:factoravg}
\end{aligned}  
\end{equation}
whereas the ideal aggregation target is:
\begin{equation}
\begin{aligned}
\sum_{i} w_i \mathbf{B}_i\mathbf{A}_i &= \mathbf{B}\mathbf{A} + \sum_{i} w_i \Delta\mathbf{B}_i \mathbf{A} + \mathbf{B} \sum_{i} w_i \Delta\mathbf{A}_i + \underbrace{\sum_{i} w_i \Delta\mathbf{B}_i\Delta\mathbf{A}_i.}_{\text{client-wise interaction term}} \label{eq:modelavg}
\end{aligned}
\end{equation}
The discrepancy between Equations~\ref{eq:factoravg} and \ref{eq:modelavg} lies in the final terms, where the difference defines the \textit{aggregation error} of factor-wise averaging. 

This aggregation error compounds over successive rounds and degrades convergence. Its magnitude is governed by two key factors. First, it may grow with the \textit{number of local update steps}: more local steps can cause $\Delta\mathbf{B}_i$ and $\Delta\mathbf{A}_i$ to diverge further from their global mean. Second, it grows with the degree of \textit{data heterogeneity} across clients: when local datasets follow substantially different distributions. Taken together, these two factors motivate the design of new federated LoRA schemes.

\begin{figure*}[t]
\centering
\includegraphics[width=0.98\textwidth]{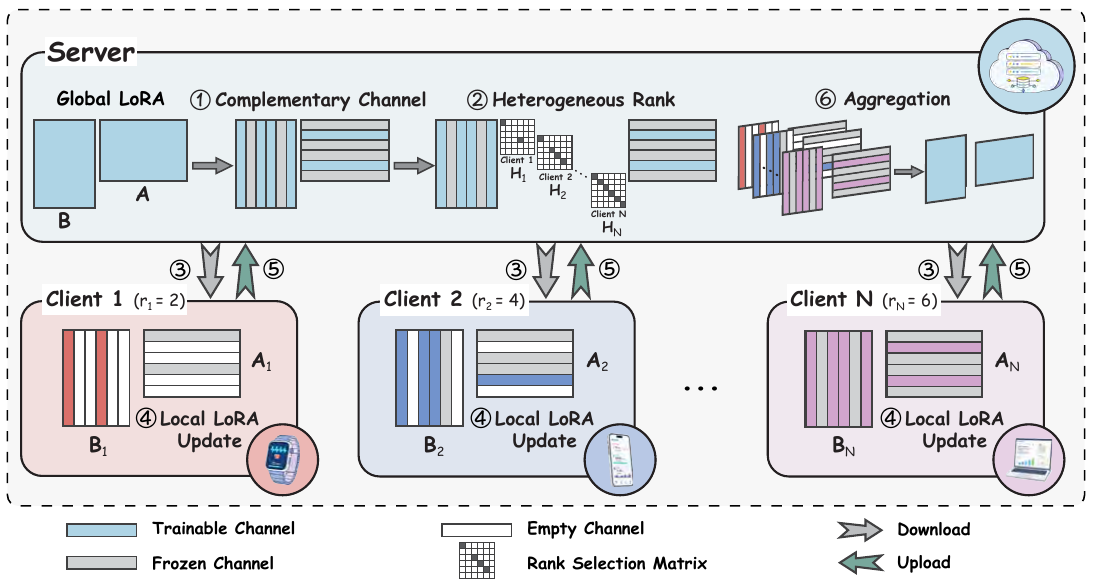}
\caption{The illustration of the proposed \texttt{CFLoRA} framework. \texttt{CFLoRA} separates two decisions: which factor may change on each latent channel and which channels a client can afford to train. A shared complementary selection mask ensures exact factor aggregation, while client-specific supports accommodate heterogeneous ranks.}
\label{fig:CFLoRA}
\end{figure*}

\section{The \texttt{CFLoRA} Framework}
\label{sec:framework}
% In this section, we present the framework design of \texttt{CFLoRA}, readily extend it to the heterogeneous rank case, and conduct convergence analysis.

\subsection{Complementary-Channel Selection}
\label{subsec:homogeneous}
The aggregation inconsistency discussed in Section~\ref{subsec:aggregation_error} arises from the bilinear interaction term $\Delta \mathbf{B}_i \Delta \mathbf{A}_i$. To eliminate this, we propose a \textit{complementary channel partitioning} strategy. In every round, if a column of $\mathbf  B$ is trainable, its corresponding row in $\mathbf  A$ can be frozen, and vice versa\footnote{Here, we describe a single LoRA layer for notational simplicity. For multiple layers, \texttt{CFLoRA} applies the same rule to each layer, with its complementary mask shared by participating clients.}.
Formally, at round $t$, the server constructs a diagonal channel-selection mask, defined as 
\begin{equation}
    \mathbf{Z}^t = \operatorname{diag}(z^t_1, \dots, z^t_R), \qquad z^t_j \in \{0, 1\},
    \label{eq:diag_z}
\end{equation}
where $z^t_j$ is sampled independently with $\mathrm{Pr}(z^t_j = 1) = p$. 
For each channel $j$, the update rule is defined as follows.
\begin{itemize}
    \item[-] If $z_j = 1$: update column $\mathbf{B}({:,j})$ and freeze row $\mathbf{A}({j,:})$;
    \item[-] If $z_j = 0$: freeze column $\mathbf{B}({:,j})$ and update row $\mathbf{A}({j,:})$.
\end{itemize}
The server broadcasts the same mask to all participating clients in each round, thereby ensuring a common update structure within each round while allowing the channel partition to vary across rounds.

During local training, client \(i\) initializes as \(\mathbf{B}_i^{t,0} = \mathbf{B}^t\) and \(\mathbf{A}_i^{t,0} = \mathbf{A}^t\) and updates at local step \(k\) according to $
\mathbf{B}_i^{t,k+1} = \mathbf{B}_i^{t,k} + \Delta\mathbf{B}_i^{t,k} \mathbf{Z}^t$ and  $\mathbf{A}_i^{t,k+1} = \mathbf{A}_i^{t,k} + (\mathbf{I}_R - \mathbf{Z}^t)\Delta\mathbf{A}_i^{t,k},$
where \(\Delta\mathbf{B}_i^{t,k}\) and \(\Delta\mathbf{A}_i^{t,k}\) denote client \(i\)'s unmasked local updates at step \(k\). Note that this full-matrix formulation is introduced primarily for clarity; in practice, clients can update only the active parameters determined by \(\mathbf{Z}^t\). After \(K\) local steps, the endpoint of client $i$ is $ \mathbf{B}_i^{t+1} = \mathbf{B}^t + \Delta\mathbf{B}_i^t \mathbf{Z}^t$ and $\mathbf{A}_i^{t+1} = \mathbf{A}^t + (\mathbf{I}_R - \mathbf{Z}^t)\Delta\mathbf{A}_i^t$, where the accumulated unmasked local updates over the $K$ steps are given by
\begin{equation}
\label{eq: homogeneous_updates}
\Delta\mathbf{B}_i^t = \sum_{k=0}^{K-1} \Delta\mathbf{B}_i^{t,k}, 
\qquad
\Delta\mathbf{A}_i^t = \sum_{k=0}^{K-1} \Delta\mathbf{A}_i^{t,k}.
\end{equation}
% are the accumulated unmasked local updates over the $K$ steps.

In each round, suppose that the server randomly samples a subset $\mathcal{S}_t$ of $m$ clients from the total $N$ clients\footnote{We consider uniform sampling rate $q$ for notational simplicity. Our analysis can be easily extended to consider client-specific sampling probability.}. Then, the server performs factor-wise aggregation:
\begin{equation}
\begin{aligned}
\mathbf B^{t+1}
=\mathbf B^t+\sum_{i} 
\frac{w_i}{q}\Delta\mathbf B_i^t\mathbf Z^t,
\qquad
\mathbf A^{t+1}
=\mathbf A^t+\sum_{i} 
\frac{w_i}{q}(\mathbf I_R-\mathbf Z^t)\Delta\mathbf A_i^t.
\end{aligned}
\label{eq:s4-aggregation}
\end{equation}
where $q = m/N$ denotes the client participation ratio.
The scaling $1/q$ compensates for the partial participation, ensuring an unbiased estimator of the full-client update. Since the complementary masks satisfy $\mathbf Z^t(\mathbf I_R-\mathbf Z^t)=\mathbf 0$, we can establish Proposition \ref{prop:exact} for the aggregation exactness.

% \begin{proposition}[Error-Free Aggregation of \texttt{CFLoRA}]
% \label{prop:exact}
% Factor-wise aggregation in \eqref{eq:s4-aggregation} is equivalent to aggregating the client-level LoRA matrices:
% \begin{equation}
% \left(\mathbf B^t+\sum_{i} 
% \frac{w_i}{q}\Delta\mathbf B_i^t\mathbf Z^t\right)\left(\mathbf A^t+\sum_{i} 
% \frac{w_i}{q}(\mathbf I_R-\mathbf Z^t)\Delta\mathbf A_i^t\right) = \sum_{i}
% \frac{w_i}{q}\mathbf B_i^{t+1}\mathbf A_i^{t+1}. \label{eq:exact_delta_average}
% \end{equation}
% \end{proposition}

\begin{proposition}[Error-Free Aggregation of \texttt{CFLoRA}]
\label{prop:exact}
Factor-wise aggregation in \eqref{eq:s4-aggregation} satisfies
\begin{equation}
\begin{aligned}
&\left(
\mathbf{B}^{t}
+\sum_{i}\frac{w_i}{q}
\Delta\mathbf{B}_i^{t}\mathbf{Z}^{t}
\right)
\left(
\mathbf{A}^{t}
+\sum_{i}\frac{w_i}{q}
(\mathbf{I}_R-\mathbf{Z}^{t})\Delta\mathbf{A}_i^{t}
\right)
\\
&\qquad =
\mathbf{B}^{t}\mathbf{A}^{t}
+\sum_{i}\frac{w_i}{q}
\left(
\mathbf{B}_i^{t+1}\mathbf{A}_i^{t+1}
-\mathbf{B}^{t}\mathbf{A}^{t}
\right).
\end{aligned}
\label{eq:exact_delta_average}
\end{equation}
\end{proposition}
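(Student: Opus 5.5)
The plan is a direct algebraic expansion of both sides of \eqref{eq:exact_delta_average}, using only the endpoint formulas $\mathbf B_i^{t+1}=\mathbf B^t+\Delta\mathbf B_i^t\mathbf Z^t$ and $\mathbf A_i^{t+1}=\mathbf A^t+(\mathbf I_R-\mathbf Z^t)\Delta\mathbf A_i^t$, together with the mask identity $\mathbf Z^t(\mathbf I_R-\mathbf Z^t)=\mathbf 0$. I first record why the identity holds: $\mathbf Z^t$ is diagonal with entries in $\{0,1\}$, so $(\mathbf Z^t)^2=\mathbf Z^t$ and hence $\mathbf Z^t(\mathbf I_R-\mathbf Z^t)=\mathbf Z^t-(\mathbf Z^t)^2=\mathbf 0$. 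Throughout, the sum over $i$ is the same index set (the sampled clients $\mathcal S_t$, or all clients) on both sides. The argument never uses $\sum_i w_i/q=1$, which matters because under partial participation this sum is not $1$ in general. This is exactly why the right-hand side is written in delta form, with $\mathbf B^t\mathbf A^t$ subtracted inside the sum.

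\emph{Left-hand side.} I expand the product into four terms:
\begin{equation*}
\mathbf B^t\mathbf A^t+\sum_i\tfrac{w_i}{q}\Delta\mathbf B_i^t\mathbf Z^t\mathbf A^t+\mathbf B^t\sum_i\tfrac{w_i}{q}(\mathbf I_R-\mathbf Z^t)\Delta\mathbf A_i^t+\sum_{i,j}\tfrac{w_iw_j}{q^2}\Delta\mathbf B_i^t\mathbf Z^t(\mathbf I_R-\mathbf Z^t)\Delta\mathbf A_j^t .
\end{equation*}
Each summand of the last (cross-client) term contains the factor $\mathbf Z^t(\mathbf I_R-\mathbf Z^t)=\mathbf 0$. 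Hence the cross-client term vanishes identically, for every pair $(i,j)$ including $i\neq j$.

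\emph{Right-hand side.} For each client $i$ I expand
\begin{equation*}
\mathbf B_i^{t+1}\mathbf A_i^{t+1}-\mathbf B^t\mathbf A^t=\Delta\mathbf B_i^t\mathbf Z^t\mathbf A^t+\mathbf B^t(\mathbf I_R-\mathbf Z^t)\Delta\mathbf A_i^t+\Delta\mathbf B_i^t\mathbf Z^t(\mathbf I_R-\mathbf Z^t)\Delta\mathbf A_i^t .
\end{equation*}
The client-wise interaction term is again zero by the same identity. I then multiply by $w_i/q$ and sum over $i$, pulling the constant matrices $\mathbf A^t$ and $\mathbf B^t$ out of the sums by linearity. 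This reproduces exactly the first three surviving terms of the left-hand side, which proves the identity.

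There is no genuine obstacle here; the only points needing care are bookkeeping. First, one must check that both bilinear terms (the cross-client and the client-wise one) are killed by the same mask identity. Second, one must keep the weights $w_i/q$ consistent and avoid silently assuming they sum to one.
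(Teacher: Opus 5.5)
Your proposal is correct and matches the paper's proof essentially step for step. Both expand the product of the aggregated factors, kill the cross-client term via $\mathbf Z^t(\mathbf I_R-\mathbf Z^t)=\mathbf 0$, expand each client endpoint product, and observe that the weighted sum of $\mathbf B_i^{t+1}\mathbf A_i^{t+1}-\mathbf B^t\mathbf A^t$ reproduces the two surviving linear terms; you additionally justify the mask identity and show that the client-wise interaction term vanishes, which the paper leaves implicit.
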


Proposition~\ref{prop:exact} shows that \texttt{CFLoRA} achieves exact federated LoRA aggregation. The detailed proof is provided in Appendix~\ref{app:proof_exact_agg}.

% ------------------- Heterogeneous Client Ranks ----------------------
\subsection{Generalization to Heterogeneous Client Ranks}
\label{subsec:heterogeneous}
As shown in Fig. \ref{fig:CFLoRA}, the proposed \texttt{CFLoRA} framework naturally accommodates clients with heterogeneous computational and memory budgets. Specifically, the server maintains LoRA factors of rank \(R\), while client \(i\) activates only \(r_i\leq R\) channels subject to its resource budget.
Specifically, in round \(t\), the server constructs a selection matrix \(\mathbf{E}_i^t \in \{0,1\}^{R \times r_i}\) for each client $i$, defined as
\begin{equation}
\begin{aligned}
\mathbf E_i^t
=[\mathbf e_{s_{i,1}^t},\ldots,
\mathbf e_{s_{i,r_i}^t}]
\in\{0,1\}^{R\times r_i},
\qquad
(\mathbf E_i^t)^\top\mathbf E_i^t = \mathbf I_{r_i},
\end{aligned}
\label{eq:heterogeneous_selection}
\end{equation}
where \(\mathbf e_j\) is the \(j\)-th standard basis vector of \(\mathbb R^R\). 
The corresponding global activation mask for client \(i\) is then defined as 
\begin{equation}
    \mathbf{H}_i^t = \mathbf{E}_i^t(\mathbf{E}_i^t)^\top \in \{0,1\}^{R \times R},
    \qquad
    \operatorname{rank}(\mathbf{H}_i^t) = r_i.
\end{equation}

Client \(i\) then initializes its LoRA factors by 
extracting the selected channels from the global factors
$\widetilde{\mathbf B}_i^{t,0}
=\mathbf B^t\mathbf E_i^t
\in\mathbb R^{d_{\mathrm{out}}\times r_i}$ and $\widetilde{\mathbf A}_i^{t,0}
=(\mathbf E_i^t)^\top\mathbf A^t
\in\mathbb R^{r_i\times d_{\mathrm{in}}}$.
Their product is therefore $\widetilde{\mathbf B}_i^{t,0}
\widetilde{\mathbf A}_i^{t,0} = \mathbf B^t\mathbf H_i^t\mathbf A^t$, which is the rank-\(r_i\) submodel to initialize local training.
% \paragraph{Local tr0aining and aggregation.}
At local step \(k\), client \(i\) updates according to
\begin{equation}
\begin{aligned}
\widetilde{\mathbf B}_i^{t,k+1}
=
\widetilde{\mathbf B}_i^{t,k}
+\Delta\widetilde{\mathbf B}_i^{t,k}(\mathbf E_i^t)^\top\mathbf Z^t\mathbf E_i^t,
\qquad
\widetilde{\mathbf A}_i^{t,k+1}
=
\widetilde{\mathbf A}_i^{t,k}
+(\mathbf I_{r_i}-(\mathbf E_i^t)^\top\mathbf Z^t\mathbf E_i^t)
\Delta\widetilde{\mathbf A}_i^{t,k},
\end{aligned}
\label{eq:compact_local_update}
\end{equation}
where \(\Delta\widetilde{\mathbf B}_i^{t,k}\) and \(\Delta\widetilde{\mathbf A}_i^{t,k}\) denote client \(i\)'s unmasked local updates at step \(k\).
After \(K\) local steps, the client uploads the updates, which are given by
\begin{equation}
\begin{aligned}
\Delta\widetilde{\mathbf B}_i^t
            = \widetilde{\mathbf B}_i^{t,K}
            -\widetilde{\mathbf B}_i^{t,0} \qquad
\Delta\widetilde{\mathbf A}_i^t
            = \widetilde{\mathbf A}_i^{t,K}
            -\widetilde{\mathbf A}_i^{t,0}.
\end{aligned}
\label{eq:compact_cumulative_updates}
\end{equation}
Then, the server embeds these updates into the \(R\)-rank space and aggregates them by:
\begin{equation}
\mathbf B^{t+1}
=
\mathbf B^t+\sum_{i}\frac{w_i}{q} \Delta\widetilde{\mathbf B}_i^t
(\mathbf E_i^t)^\top,
\qquad
\mathbf A^{t+1}
=
\mathbf A^t+\sum_{i}\frac{w_i}{q} \mathbf E_i^t
\Delta\widetilde{\mathbf A}_i^t.
\label{eq:heterogeneous_aggregation}
\end{equation}
We establish the following proposition to show the exactness of aggregation under heterogeneous client ranks.
\begin{proposition}[Error-Free Aggregation of \texttt{CFLoRA} with Heterogeneous Client Ranks]
\label{prop:heterogeneous_exact}
Suppose all participating clients use the common complementary mask \(\mathbf Z^t\), while their ranks \(r_i\) and selection matrices \(\mathbf E_i^t\) may differ. The factor-wise aggregation rule in \eqref{eq:heterogeneous_aggregation} satisfies
\begin{equation}
\begin{aligned}
\left(\mathbf B^t + \sum_{i} \frac{w_i}{q} \Delta\widetilde{\mathbf B}_i^t (\mathbf E_i^t)^\top\right) 
\left(\mathbf A^t + \sum_{i} \frac{w_i}{q} \mathbf E_i^t \Delta\widetilde{\mathbf A}_i^t\right) = 
\mathbf B^t\mathbf A^t + \sum_{i} \frac{w_i}{q} \left( \widetilde{\mathbf B}_i^{t,K} \widetilde{\mathbf A}_i^{t,K} - \mathbf B^t\mathbf H_i^t\mathbf A^t \right).
\end{aligned}
\label{eq:heterogeneous_exactness}
\end{equation}
\end{proposition}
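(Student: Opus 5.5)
The plan is to reduce everything to the diagonal structure of the masks, as in the proof of Proposition~\ref{prop:exact}. First I will record the key algebraic facts. Since $\mathbf E_i^t$ selects distinct standard basis vectors, the matrix $\mathbf M_i^t := (\mathbf E_i^t)^\top \mathbf Z^t \mathbf E_i^t \in \mathbb R^{r_i\times r_i}$ is diagonal with $0/1$ entries $z^t_{s_{i,1}^t},\dots,z^t_{s_{i,r_i}^t}$. Hence $\mathbf M_i^t$ is idempotent and $\mathbf M_i^t(\mathbf I_{r_i}-\mathbf M_i^t)=\mathbf 0$. Also $\mathbf H_i^t$ and $\mathbf Z^t$ are both diagonal, so they commute, and $\mathbf H_i^t\mathbf E_i^t=\mathbf E_i^t$.

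Next I will telescope the local recursion \eqref{eq:compact_local_update}. The mask $\mathbf M_i^t$ is fixed within the round, so summing over $k$ gives
\[
\Delta\widetilde{\mathbf B}_i^t=\mathbf G_i\mathbf M_i^t,\qquad
\Delta\widetilde{\mathbf A}_i^t=(\mathbf I_{r_i}-\mathbf M_i^t)\boldsymbol\Gamma_i,
\]
where $\mathbf G_i=\sum_k\Delta\widetilde{\mathbf B}_i^{t,k}$ and $\boldsymbol\Gamma_i=\sum_k\Delta\widetilde{\mathbf A}_i^{t,k}$.

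I then embed these into rank $R$. For the $\mathbf B$ factor, using commutativity and $\mathbf E_i^t(\mathbf E_i^t)^\top=\mathbf H_i^t$,
\[
\Delta\widetilde{\mathbf B}_i^t(\mathbf E_i^t)^\top=\mathbf G_i(\mathbf E_i^t)^\top\mathbf Z^t\mathbf H_i^t=\mathbf G_i(\mathbf E_i^t)^\top\mathbf H_i^t\mathbf Z^t=\mathbf G_i(\mathbf E_i^t)^\top\mathbf Z^t,
\]
where the last step uses $(\mathbf E_i^t)^\top\mathbf H_i^t=(\mathbf E_i^t)^\top$. For the $\mathbf A$ factor,
\[
\mathbf E_i^t\Delta\widetilde{\mathbf A}_i^t=(\mathbf E_i^t-\mathbf H_i^t\mathbf Z^t\mathbf E_i^t)\boldsymbol\Gamma_i=(\mathbf I_R-\mathbf Z^t)\mathbf E_i^t\boldsymbol\Gamma_i .
\]
Consequently every cross product $\Delta\widetilde{\mathbf B}_i^t(\mathbf E_i^t)^\top\mathbf E_j^t\Delta\widetilde{\mathbf A}_j^t$ contains the factor $\mathbf Z^t(\mathbf I_R-\mathbf Z^t)=\mathbf 0$, and so vanishes for every pair $(i,j)$.

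With these identities the rest is bookkeeping. Expanding the left-hand side of \eqref{eq:heterogeneous_exactness}, the quadratic double sum vanishes and what remains is
\[
\mathbf B^t\mathbf A^t+\sum_i\frac{w_i}{q}\Big(\Delta\widetilde{\mathbf B}_i^t(\mathbf E_i^t)^\top\mathbf A^t+\mathbf B^t\mathbf E_i^t\Delta\widetilde{\mathbf A}_i^t\Big).
\]
On the right-hand side I substitute $\widetilde{\mathbf B}_i^{t,K}=\mathbf B^t\mathbf E_i^t+\Delta\widetilde{\mathbf B}_i^t$ and $\widetilde{\mathbf A}_i^{t,K}=(\mathbf E_i^t)^\top\mathbf A^t+\Delta\widetilde{\mathbf A}_i^t$. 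Expanding gives $\mathbf B^t\mathbf H_i^t\mathbf A^t$, which cancels the subtracted term, plus the same two linear terms, plus the product $\Delta\widetilde{\mathbf B}_i^t\Delta\widetilde{\mathbf A}_i^t=\mathbf G_i\mathbf M_i^t(\mathbf I-\mathbf M_i^t)\boldsymbol\Gamma_i=\mathbf 0$. Matching the two sides completes the argument.

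The main obstacle is the cross-client step. Different clients select different channel subsets, so the within-client idempotence of $\mathbf M_i^t$ does not directly kill products between clients $i\neq j$. The resolution is to push the mask $\mathbf Z^t$ out to the global $R$-dimensional side, via the commutation of the diagonal matrices $\mathbf H_i^t$ and $\mathbf Z^t$. This is possible only because all clients share the common mask $\mathbf Z^t$, which is exactly the hypothesis of the proposition.
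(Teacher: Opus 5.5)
Your proof is correct and follows the paper's route: expand both sides, note that every bilinear term (cross-client and within-client) contains $\mathbf Z^t(\mathbf I_R-\mathbf Z^t)=\mathbf 0$, and match the two linear terms. Your telescoping and the commutation $\mathbf H_i^t\mathbf Z^t=\mathbf Z^t\mathbf H_i^t$ also justify explicitly the identities $\Delta\widetilde{\mathbf B}_i^t(\mathbf E_i^t)^\top=\Delta\widetilde{\mathbf B}_i^t(\mathbf E_i^t)^\top\mathbf Z^t$ and $\mathbf E_j^t\Delta\widetilde{\mathbf A}_j^t=(\mathbf I_R-\mathbf Z^t)\mathbf E_j^t\Delta\widetilde{\mathbf A}_j^t$, which the paper uses without proof when it inserts the mask into the cross terms.
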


Proposition~\ref{prop:heterogeneous_exact} shows that \texttt{CFLoRA} still achieves exact federated LoRA aggregation under heterogeneous client ranks. The proof can be found in Appendix~\ref{app:proof_hete_exact_agg}.

\subsection{Complementary-channel Scaling}
\label{sec:channel_selection} 

To prevent biased updates arising from channel selection, the protocol introduces a unified scaling compensation factor. 
The numbers of trainable channels assigned to \(\mathbf B\) and \(\mathbf A\) are
\begin{equation}
    k_{B,i}^t=\operatorname{tr}(\widetilde{\mathbf Z}_i^t),
\qquad
k_{A,i}^t=r_i-k_{B,i}^t,
\end{equation}
where $\widetilde{\mathbf Z}_i^t=(\mathbf E_i^t)^\top\mathbf Z^t\mathbf E_i^t$, and we have $\mathbb E[k_{B,i}^t]=r_i p$ and $\mathbb E[k_{A,i}^t]=r_i(1-p)$.
Then, to compensate for the unequal probabilities of selecting the two factors, we scale the gradients by \(1/p\) and \(1/(1-p)\), respectively.
Specifically, for \(0<p<1\), the factors are updated as
\begin{equation}
\begin{aligned}
\widetilde{\mathbf B}_i^{t,k+1}
&=
\widetilde{\mathbf B}_i^{t,k}
-
\frac{\eta}{p}\nabla_{\widetilde{\mathbf B}}\ell\!\left(\mathbf W_0+
\widetilde{\mathbf B}_i^{t,k}\widetilde{\mathbf A}_i^{t,k};\xi\right)\widetilde{\mathbf Z}_i^t,
\\
\widetilde{\mathbf A}_i^{t,k+1}
&=
\widetilde{\mathbf A}_i^{t,k}
-\frac{\eta}{1-p}
(\mathbf I_{r_i}-\widetilde{\mathbf Z}_i^t)
\nabla_{\widetilde{\mathbf A}}\ell\!\left(
\mathbf W_0+\widetilde{\mathbf B}_i^{t,k}\widetilde{\mathbf A}_i^{t,k};\xi\right),
\end{aligned}    
\end{equation}
where \(\eta\) is the learning rate. Note that setting $r_i = R$, which implies $\mathbf{E}_i^t = \mathbf{I}_R$ and $\widetilde{\mathbf{Z}}_i^t = \mathbf{Z}^t$, recovers the homogeneous update rule. The activation matrix \(\mathbf E_i^t\) and complementary mask \(\widetilde{\mathbf Z}_i^t\) remain fixed throughout the round. 
% Each reciprocal-probability correction is applied exactly once per local step. 
Since the updates defined in Eq. \ref{eq: homogeneous_updates} and Eq. \ref{eq:compact_cumulative_updates} can naturally absorb this scaling, Propositions \ref{prop:exact} and \ref{prop:heterogeneous_exact} continue to guarantee exact aggregation.
% The embedded updates retain the global complementary support and consequently, complementarity is preserved even when clients activate different numbers and subsets of channels. Therefore, error-free aggregation remains under heterogeneous ranks, as established in Proposition \ref{prop:heterogeneous_exact}.

Considering practical partial participation (selecting $m$ out of $N$ clients per round), the procedure of \texttt{CFLoRA} is summarized in Algorithm \ref{alg:CFLoRA-homogeneous} for homogeneous cases and Algorithm \ref{alg:CFLoRA} for heterogeneous cases.

% ------------------ Convergence Analysis ------------------
\subsection{Convergence Analysis}
\label{sec:convergence} 
Next, we theoretically analyze the convergence of \texttt{CFLoRA} under partial client participation. Our analysis naturally encompasses full participation as a special case. 

\textbf{Notations.} For each client $i$, let $\mathcal H_i$ denote the uniform distribution over diagonal binary masks with  $r_i$ active channels, satisfying
$\mathbb E_{\mathbf H\sim\mathcal H_i}[\mathbf H]=\frac{r_i}{R}\mathbf I_R$. The activation mask
$\mathbf H_i^t=\mathbf E_i^t(\mathbf E_i^t)^\top\sim\mathcal H_i$
is sampled independently across clients and independently of $\mathbf Z^t$.
Let $\mathbf X=(\mathbf B,\mathbf A)$, for a given activation mask $\mathbf H$, the mask-conditioned and expected local objectives for client $i$ are respectively defined as
\begin{equation}
f_i^{\mathbf H}(\mathbf X)
=
\mathbb E_{\xi\sim\mathcal D_i}
\bigl[\ell(\mathbf W_0+\mathbf B\mathbf H\mathbf A;\xi)\bigr],
\qquad
\phi_i(\mathbf X)
=
\mathbb E_{\mathbf H\sim\mathcal H_i}
\bigl[f_i^{\mathbf H}(\mathbf X)\bigr],
\end{equation}
and therefore the global objective is formulated as
\begin{equation}
   \Phi(\mathbf X)=\sum_{i=1}^{N}w_i\phi_i(\mathbf X). 
\end{equation}
Notably, when $r_i=R$ for all clients,
$\mathbf H_i=\mathbf I_R$ and $\Phi(\mathbf X)$ reduces to the standard objective $F(\mathbf X)$.

We assume that each client's local objective is $L$-smooth and the global objective is bounded (Assumption~\ref{assump:smooth}), local stochastic gradients are unbiased with bounded variance (Assumption~\ref{assump:unbias}), which are detailed in Appendix \ref{app:assumptions}. With notations above and Lemma~\ref{lem:compact_unbiased} on unbiased estimators of the scaled aggregated gradients (detailed in Appendix~\ref{app:proof_lemma}), we establish the following convergence analysis.

% the scaled aggregated gradients are unbiased estimators of the global gradients (Lemma~\ref{lem:compact_unbiased}). Formal mathematical statements and corresponding proofs are deferred to Appendix~\ref{app:assumptions} and Appendix~\ref{app:proof_lemma},

% Assumption~\ref{assump:smooth} and 
% Assumption~\ref{assump:unbias} (in Appendix \ref{app:assumptions}) and Lemma \ref{lem:compact_unbiased} (in \ref{app:proof_lemma}), we establish the following convergence analysis.

%Our analysis builds upon standard regularity conditions widely used in non-convex federated optimization \citep{bottou2018optimization, fang2026fslora}. Specifically, we assume that each client's local objective is $L$-smooth and the global objective is bounded below (Assumption~\ref{assump:smooth}); local stochastic gradients are unbiased with bounded variances for both minibatch and mask sampling, alongside a uniformly bounded second moment (Assumption~\ref{assump:unbias}); and the scaled aggregated gradients are unbiased estimators of the global gradients (Lemma~\ref{lem:compact_unbiased}). Formal mathematical statements and corresponding proofs are deferred to Appendix~\ref{app:assumptions} and Appendix~\ref{app:proof_lemma}, which together establish our convergence guarantees.

% To quantify client-sampling variability, define
% \begin{equation}
% \label{eq:client_sampling_dispersion}

% \end{equation}
% Assumption~\ref{assump:unbias} implies
% $\zeta_w^2\le NM\sum_iw_i^2$,
% so no additional bounded-heterogeneity assumption is required.

\begin{theorem}[Convergence of \texttt{CFLoRA} with Heterogeneous Client Ranks]
\label{thm:convergence}
Suppose that Assumptions~\ref{assump:smooth}--\ref{assump:unbias} hold and $\Delta_0>0$. With the learning rate
\begin{equation}
\label{eq:explicit_learning_rate}
\eta_T
=
\frac{1}{K}
\min\left\{
\frac{1}{4L\kappa},
\sqrt{\frac{\Delta_0}{L\mathcal V_K T}},
\left(\frac{2\Delta_0}{\mathcal D_K T}\right)^{1/3}
\right\},
\end{equation}
\texttt{CFLoRA} satisfies
\begin{equation}
\label{eq:convergence_bound}
\frac{1}{T}\sum_{t=0}^{T-1}
\mathbb E\left[
\|\nabla\Phi(\mathbf X^t)\|^2
\right]
\le
4\sqrt{\frac{L\Delta_0\mathcal V_K}{T}}
+
2\left(\frac{4\Delta_0^2\mathcal D_K}{T^2}\right)^{1/3}
+
\frac{8L\kappa\Delta_0}{T},
\end{equation}
where
$\mathcal V_K
=
\kappa\left(
\frac{N\tau_w^2}{m}
+
\frac{N-m}{m(N-1)}\zeta_w^2
\right)
+
\frac{\kappa^2\sigma_w^2}{qK},
\qquad
\mathcal D_K
=
\frac{4+\chi_m}{8}
L^2\kappa^3M
\left(1-\frac{1}{K}\right)^2,
$
$
\chi_m
=
1+\frac{N-m}{m(N-1)}\left(
N\sum_{i}w_i^2-1
\right),
$
$\sigma_w^2=\sum_{i=1}^{N}w_i^2\sigma_i^2$,
$\tau_w^2=\sum_{i=1}^{N}w_i^2\tau_i^2$,
$\kappa=p_{\min}^{-1}$, $p_{\min}=\min\{p, 1-p\}$,
$\Delta_0=\mathbb E[\Phi(\mathbf X^0)]-\Phi^\star$, $\zeta_w^2
=
\sup_{\mathbf X\in\mathcal C}
\left\{
N\sum_{i}w_i^2
\|\nabla\phi_i(\mathbf X)\|^2
-
\|\nabla\Phi(\mathbf X)\|^2
\right\}$, and $\mathbf X^0$ denotes the initial iterate.
For $N=1$, set $\frac{N-m}{m(N-1)}=0$.
\end{theorem}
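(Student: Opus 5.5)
The plan is to treat \texttt{CFLoRA} as a single stochastic-gradient-type step on $\Phi$ per round. The step direction is the aggregated, mask-scaled update $\mathbf G^t := -(\mathbf X^{t+1}-\mathbf X^t)/(\eta K)$. Exactness of factor-wise aggregation (Propositions~\ref{prop:exact} and \ref{prop:heterogeneous_exact}) guarantees that $\mathbf X^{t+1}$ is a well-defined global iterate, with no hidden cross-client product term to track.

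The starting point is the $L$-smoothness descent inequality
\begin{equation*}
\mathbb E[\Phi(\mathbf X^{t+1})]\le \mathbb E[\Phi(\mathbf X^t)] - \eta K\,\mathbb E\langle \nabla\Phi(\mathbf X^t),\mathbf G^t\rangle + \tfrac{L\eta^2K^2}{2}\mathbb E\|\mathbf G^t\|^2 .
\end{equation*}
Smoothness of $\Phi$ follows from that of each $f_i^{\mathbf H}$, since $\Phi$ is an average of them.

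Next I would split $\mathbf G^t$ into an idealized part and a drift part. The idealized part $\bar{\mathbf G}^t$ evaluates all local stochastic gradients at the round-start point $\mathbf X^t$ (lifted via $\mathbf E_i^t$). The drift part is $\mathbf G^t-\bar{\mathbf G}^t$. By Lemma~\ref{lem:compact_unbiased}, taking expectation over the complementary mask $\mathbf Z^t$ (the $1/p$ and $1/(1-p)$ scalings), the activation masks $\mathbf H_i^t\sim\mathcal H_i$, client sampling with the $1/q$ reweighting, and minibatches gives $\mathbb E[\bar{\mathbf G}^t\mid\mathbf X^t]=\nabla\Phi(\mathbf X^t)$.

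The inner product then becomes $-\|\nabla\Phi\|^2$ plus a drift cross term. I would bound the cross term by Young's inequality as $\tfrac12\|\nabla\Phi\|^2+\tfrac12\|\mathbf G^t-\bar{\mathbf G}^t\|^2$.

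For the second moment I would decompose $\mathbb E\|\bar{\mathbf G}^t\|^2$ into three pieces:
\begin{itemize}
\item The mean part $\|\nabla\Phi\|^2$.
\item The mask variance, giving at most $\kappa N\tau_w^2/m$. The inverse-probability scaling inflates the second moment by a factor of at most $\kappa=1/p_{\min}$.
\item The client-sampling variance, giving $\kappa\frac{N-m}{m(N-1)}\zeta_w^2$. This is the sampling-without-replacement formula applied to the vectors $Nw_i\nabla\phi_i$; $\zeta_w^2$ is defined exactly so this is its supremum.
\end{itemize}
Minibatch noise averaged over $K$ steps contributes $\kappa^2\sigma_w^2/(qK)$. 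Together these give $\mathcal V_K$ (with the $\kappa\|\nabla\Phi\|^2$ mean term absorbed by the condition $\eta K\le 1/(4L\kappa)$).

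The drift is the main obstacle. Each local iterate $\widetilde{\mathbf X}_i^{t,k}$ drifts from $\mathbf X^t$ by at most $\eta k\kappa\sqrt M$, using the uniformly bounded second moment $M$ from Assumption~\ref{assump:unbias} together with the mask scaling. By $L$-smoothness of $f_i^{\mathbf H}$, the gradient mismatch per step is then $L\eta k\kappa\sqrt M$. Averaging over $k<K$ produces the factor $(K-1)^2/K^2\cdot K^2$, i.e.\ $(1-1/K)^2$, times $L^2\eta^2K^2\kappa^2M$. The extra $\kappa$ and the constant $(4+\chi_m)/8$ come from the $1/q$-weighted aggregation: $\chi_m$ is the second moment of the sampling weights $\sum_{i\in\mathcal S_t}w_i/q$. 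The care needed is to keep the mask rescaling inside the local trajectory consistent, so that the drift bound does not pick up $\kappa^2$ twice.

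Combining the pieces gives, per round,
\begin{equation*}
\tfrac{\eta K}{4}\mathbb E\|\nabla\Phi(\mathbf X^t)\|^2\le \mathbb E\Phi(\mathbf X^t)-\mathbb E\Phi(\mathbf X^{t+1})+\tfrac{L\eta^2K^2}{2}\mathcal V_K\cdot c+\eta^3K^3\mathcal D_K\cdot c'.
\end{equation*}
Telescoping over $T$ rounds and dividing by $\eta KT/4$ yields $\frac{4\Delta_0}{\eta KT}+2L\eta K\mathcal V_K+4\eta^2K^2\mathcal D_K$.

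To finish, I plug in $\eta_T$ and use the standard three-case min argument. The bound on $1/(\eta K)$ is at most the sum of the reciprocals of the three candidates in the minimum, and the other two terms are bounded by their corresponding candidates. This produces the $4\sqrt{L\Delta_0\mathcal V_K/T}$, $2(4\Delta_0^2\mathcal D_K/T^2)^{1/3}$ and $8L\kappa\Delta_0/T$ terms; I would adjust the constants $c,c'$ to match.
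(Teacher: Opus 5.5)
Your architecture matches the paper's: one descent step on $\Phi$ per round with $\gamma=\eta K$, and the round direction split into a round-start term, a drift term and a minibatch-noise term. You also use the same mask-scaling bounds: $\kappa$ on the norm, and $\kappa$ on the second moment only for vectors independent of $\mathbf Z^t$. The without-replacement sampling identity and the three-case step size are the same, and your accounting for $\mathcal V_K$ is correct.

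The gap is the drift cross term, which is what fixes $\mathcal D_K$. You bound it by $\tfrac12\|\nabla\Phi\|^2+\tfrac12\mathbb E\|\mathbf G^t-\bar{\mathbf G}^t\|^2$, that is, through the \emph{second moment} of the drift. The local iterates depend on $\mathbf Z^t$, so for the scaling acting on gradient differences only the deterministic factor $\kappa$ is available. Combined with the displacement bound $\eta k\kappa\sqrt M$, this second moment is $\chi_m\kappa d$ with $d=\tfrac14L^2\gamma^2\kappa^3M(1-1/K)^2$. That is of order $\chi_m\kappa^4$, and no constant $c'$ turns it into $\tfrac{4+\chi_m}{8}\kappa^3$ for all $p$.

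The missing idea is to control only the conditional \emph{mean} of the drift, by duality. For an $\mathcal F_t$-measurable unit vector $\mathbf z$, self-adjointness of the diagonal scaling $\mathcal R_t$ gives $\langle\mathbf z,\mathbb E\,\mathcal R_t(\mathbf u)\rangle=\mathbb E\langle\mathcal R_t(\mathbf z),\mathbf u\rangle$. Cauchy--Schwarz with $\mathbb E\|\mathcal R_t(\mathbf z)\|^2\le\kappa$ then costs only $\kappa^{1/2}$, and $\mathbb E[\omega_t]=1$ (with $\omega_t=\sum_{i\in\mathcal S_t}w_i/q$) removes $\chi_m$. Hence $\|\mathbb E[\mathbf G^t-\bar{\mathbf G}^t]\|^2\le d\propto\kappa^3$; the third power of $\kappa$ comes from this step, not from the $1/q$ weighting. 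The $\kappa^4$ quantity $\chi_m\kappa d$ then enters only inside $\tfrac{L\gamma^2}{2}\mathbb E\|\mathbf G^t\|^2$, where $L\gamma\kappa\le\tfrac14$ reduces it to $\tfrac{\gamma}{4}\chi_m d$. These two pieces are exactly where $(2+\tfrac{\chi_m}{2})d=\mathcal D_K\gamma^2$ comes from.

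The constants also do not close. Your $\tfrac12$/$\tfrac12$ Young split leaves net descent $\tfrac{\gamma}{4}\|\nabla\Phi\|^2$, so your telescoped bound begins with $4\Delta_0/(\gamma T)$. Under the prescribed $\eta_T$ this yields $16L\kappa\Delta_0/T$ and at least $6\sqrt{L\Delta_0\mathcal V_K/T}$, instead of $8L\kappa\Delta_0/T$ and $4\sqrt{L\Delta_0\mathcal V_K/T}$. The coefficient of $\Delta_0$ cannot be absorbed into $c,c'$. The paper instead uses $ab\le a^2/4+b^2$ to get $-\langle\nabla\Phi(\mathbf X^t),\mathbb E[\mathbf v^t]\rangle\le-\tfrac34\|\nabla\Phi(\mathbf X^t)\|^2+d$, hence net $\tfrac{\gamma}{2}$ descent. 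That gives the base bound $\frac{2\Delta_0}{\gamma T}+2L\gamma\mathcal V_K+\mathcal D_K\gamma^2$, from which the stated inequality follows exactly.

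A smaller fix: define the drift with exact gradients $\nabla f_i^{\mathbf H}$, and peel off the minibatch noise at the local iterates as a martingale-difference term. If $\bar{\mathbf G}^t$ literally uses stochastic gradients at $\mathbf X^t$, the drift becomes a difference of per-sample gradients. Assumption~\ref{assump:smooth}, which only asserts smoothness of $f_i^{\mathbf H}$, does not control that difference.
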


The proof is provided in Appendix \ref{app:proof_convergence}. 
Due to the exact aggregation, the convergence bound in Theorem~\ref{thm:convergence} achieves an $\mathcal O(1/\sqrt T)$ convergence rate without non-vanishing bilinear aggregation-error terms. Heterogeneous client ranks affect both the objective $\Phi$ and the activation-mask variance $\tau_w^2$, which vanishes when $r_i=R$ for every client. Partial participation introduces the client-sampling term $\frac{N-m}{m(N-1)}\zeta_w^2$. More discussions are provided in Appendix~\ref{app:insights_convergence}.

\begin{corollary}[Convergence of \texttt{CFLoRA} with Homogeneous Client Ranks]
\label{cor:homogeneous}
Under the conditions of Theorem~\ref{thm:convergence},
suppose $r_i=R$ for every client.
Then $\Phi=F$ and $\tau_w^2=0$.
Choosing
$\eta_T=(4L\kappa K\sqrt T)^{-1}$,
the server iterates satisfy
\begin{equation}
\label{eq:homogeneous_rate}
\frac{1}{T}\sum_{t=0}^{T-1}
\mathbb E\!\left[
\|\nabla F(\mathbf X^t)\|^2
\right]
\le
\frac{
8L\kappa\Delta_0
+\frac{N-m}{m(N-1)}\zeta_w^2/2
+\kappa\sigma_w^2/(2qK)
}{\sqrt T}
+
\frac{(4+\chi_m)\kappa M}{128T}
\left(1-\frac{1}{K}\right)^2.
\end{equation}
\end{corollary}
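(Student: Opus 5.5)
The plan is not to substitute into the final bound \eqref{eq:convergence_bound}, since that bound is tied to the specific step size \eqref{eq:explicit_learning_rate}. Instead, I would go back to the intermediate one-round descent inequality from the proof of Theorem~\ref{thm:convergence} (Appendix~\ref{app:proof_convergence}). I expect that proof, before the step size is tuned, yields a master inequality valid for any $\eta$ with $\eta K\le 1/(4L\kappa)$:
\begin{equation*}
\frac{1}{T}\sum_{t=0}^{T-1}\mathbb E\bigl[\|\nabla\Phi(\mathbf X^t)\|^2\bigr]
\le
\frac{2\Delta_0}{\eta K T}+2L\eta K\,\mathcal V_K+\eta^2K^2\mathcal D_K .
\end{equation*}
I would confirm this form by checking that it reproduces \eqref{eq:convergence_bound} term by term.
\begin{itemize}
\item The choice $\eta K=\sqrt{\Delta_0/(L\mathcal V_KT)}$ makes the first two terms each equal $2\sqrt{L\Delta_0\mathcal V_K/T}$, giving $4\sqrt{\cdot}$.
\item The choice $\eta K=(2\Delta_0/(\mathcal D_KT))^{1/3}$ makes the first and third terms each equal $(4\Delta_0^2\mathcal D_K/T^2)^{1/3}$.
\item The cap $\eta K=1/(4L\kappa)$ gives $8L\kappa\Delta_0/T$.
\end{itemize}
Since all three cases match, I would take this inequality (with these constants) as established inside the theorem's proof.

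Next, I would specialize to homogeneous ranks $r_i=R$. Each activation mask is then deterministic, $\mathbf H_i^t=\mathbf I_R$ with $\mathbf E_i^t=\mathbf I_R$, so $\mathcal H_i$ is a point mass. Hence $f_i^{\mathbf H}=f_i$ and $\phi_i=f_i$, which gives $\Phi=F$. The mask-sampling variance $\tau_i^2$ is zero, so $\tau_w^2=0$. Therefore
\begin{equation*}
\mathcal V_K=\kappa\frac{N-m}{m(N-1)}\zeta_w^2+\frac{\kappa^2\sigma_w^2}{qK}.
\end{equation*}

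Finally, I would substitute $\eta=\eta_T=(4L\kappa K\sqrt T)^{-1}$, so that $\eta K=1/(4L\kappa\sqrt T)$. This satisfies the admissibility cap $\eta K\le 1/(4L\kappa)$ for all $T\ge1$. The three terms become:
\begin{itemize}
\item $2\Delta_0/(\eta KT)=8L\kappa\Delta_0/\sqrt T$;
\item $2L\eta K\mathcal V_K=\mathcal V_K/(2\kappa\sqrt T)=\bigl(\tfrac{N-m}{m(N-1)}\zeta_w^2/2+\kappa\sigma_w^2/(2qK)\bigr)/\sqrt T$;
\item $\eta^2K^2\mathcal D_K=\mathcal D_K/(16L^2\kappa^2T)=(4+\chi_m)\kappa M(1-1/K)^2/(128T)$.
\end{itemize}
Summing these gives exactly \eqref{eq:homogeneous_rate}.

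The main obstacle is the first step: the master inequality must be extracted with precisely these constants and must hold for arbitrary admissible $\eta$, not only for the tuned value. If the appendix states the bound only after tuning, I would re-derive it from the per-round descent lemma. That derivation uses $L$-smoothness, the unbiasedness in Lemma~\ref{lem:compact_unbiased}, the variance decomposition into minibatch, mask and client-sampling parts, and a drift bound giving the $\mathcal D_K$ term. I would then telescope over $t$ and divide by $\eta KT/2$.
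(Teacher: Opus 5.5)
Your proposal is correct and is essentially the paper's proof. The appendix states your master inequality, $\frac{2\Delta_0}{\gamma T}+2L\gamma\mathcal V_K+\mathcal D_K\gamma^2$ with $\gamma=\eta K$, as \eqref{eq:proof_pp_base_bound}, valid for any $\gamma\le(4L\kappa)^{-1}$; the corollary's proof just notes $\Phi=F$ and $\tau_w^2=0$, substitutes $\gamma=(4L\kappa\sqrt T)^{-1}$, and your term-by-term evaluation reproduces \eqref{eq:homogeneous_rate}.
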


The proof is provided in Appendix \ref{app:proof_homogeneous}. 

\paragraph{Significance of Corollary~\ref{cor:homogeneous}.} Under homogeneous ranks, there is no objective discrepancy induced by lower client ranks. As a result, the $\mathcal O(1/\sqrt T)$ convergence rate applies directly to the original LoRA objective $F$ in \eqref{eq:objective}. This results from the unique advantage of \texttt{CFLoRA}: it co-adapts both factors while entirely avoiding the aggregation errors in standard federated LoRA.

%In Corollary~\ref{cor:homogeneous}, 

\paragraph{Determination of channel sampling probability $p$.}
The parameter $\kappa=p_{\min}^{-1}$ reveals that low selection probabilities for either factor amplify the upper bound, underscoring the need to update both factors simultaneously. Since the upper bounds in Theorem~\ref{thm:convergence} and Corollary~\ref{cor:homogeneous} decrease as $p_{\min}=\min\{p,1-p\}$ increases, they are minimized at $p=0.5$. However, this stems from a worst-case theoretical bound, which may not be optimal empirically.

We conduct extensive empirical studies on $p$ (Tables~\ref{table:ablation_robert_homo_10of25}-\ref{table:ablation_robert_hete_10of25} and  Tables~\ref{table:ablation_llama_homo_10of25}-\ref{table:ablation_llama_hete_10of25}), yielding two intriguing insights. On the one hand, increasing $p$ from 0.3 to 0.9 generally improves performance. This aligns with prior findings \citep{hayou2024lora+} that the two low-rank matrices contribute asymmetrically to the optimization process, and prioritizing the $\mathbf{B}$ matrix update is beneficial. On the other hand, setting $p=1.0$ degrades performance, which is consistent with our theoretical analysis that the co-adaptation of both factors remains crucial \footnote{Note that for $p=1.0$, the scaling step described in Section~\ref{sec:channel_selection} is omitted.}. Consequently, we adopt $p=0.9$ as the default setting across all experiments.

%2) setting $p=0.5$ to $p=0.9$ shows robust performance, confirming our robustness to parameter sensitivity, 

% confirms that $p=0.5$ achieves good average empirical performance across six datasets. 

%The distinction also affects the benefit of local computation: at a fixed effective round step size $\gamma=\eta K$, the remaining additive sampling-noise term scales as $\kappa^2\sigma_w^2/K$, while heterogeneous-rank training additionally incurs the activation-mask term $\kappa\tau_w^2$. Complementary selection remains stochastic in the homogeneous case, as reflected by $\kappa$. Thus, both settings retain an $\mathcal O(T^{-1/2})$ stationarity guarantee, while homogeneous full-rank training removes a source of sampling variability and recovers the full-adapter objective.

\subsection{Resource Efficiency of \texttt{CFLoRA}} \label{subsec:efficiency}
Compared to standard federated LoRA at the same active rank, complementary updates provide efficiency gains across several key aspects: i) \textbf{Trainable Parameters}: By freezing the rows and/or columns, \texttt{CFLoRA} can reduce the number of trainable parameters by $50\%$ per round \footnote{The $50\%$ reduction refers to square attention projection matrices, so each trainable row of $\mathbf{A}$ and column of $\mathbf{B}$ contains the same number of parameters. For rectangular matrices, the reduction depends on their dimensions and the selected channels.}. ii) \textbf{Communication Cost}: By transmitting only the trainable parameters, \texttt{CFLoRA} reduces the adapter uploading cost by $50\%$, though downlink of full factors is still needed for clients to perform forward passes. A comprehensive analysis is provided in Appendix~\ref{app:implementation_complexity}.

% ------------------------ Experiments ------------------------

% ---------------------- Table GLUE Homo 10-of-25 -----------------------
\begin{table*}[!t]
\centering
% \resizebox{\textwidth}{!}{
% \setlength{\tabcolsep}{3mm}  % 保持原压缩
\caption{Test accuracy for fine-tuning the RoBERTa model on the GLUE benchmark under \textit{homogeneous} client ranks.}
\label{table:main_results_robert_homo_10of25} 
\resizebox{\textwidth}{!}{
\renewcommand{\arraystretch}{1}
\begin{tabular}{l|cccccc|c}
\toprule
\textbf{Method} & \textbf{QNLI} & \textbf{MRPC} & \textbf{CoLA} & \textbf{MNLI-M} & \textbf{MNLI-MM} & \textbf{QQP} & \textbf{Avg.} \\
\midrule
FedIT    & 86.61 {\scriptsize \color{gray} $\pm$ 0.26} & 72.27 {\scriptsize \color{gray} $\pm$ 2.32} & 70.49 {\scriptsize \color{gray} $\pm$ 0.90} & 79.10 {\scriptsize \color{gray} $\pm$ 2.21} & \textbf{79.89} {\scriptsize \color{gray} $\pm$ 2.28} & 71.75 {\scriptsize \color{gray} $\pm$ 1.35} & 76.69 \\
FFA-LoRA & 88.49 {\scriptsize\color{gray}$\pm$ 0.23}
& 70.58 {\scriptsize\color{gray}$\pm$ 1.56}
& 73.81 {\scriptsize\color{gray}$\pm$ 1.74}
& 72.42 {\scriptsize\color{gray}$\pm$ 3.27}
& 73.39 {\scriptsize\color{gray}$\pm$ 3.08}
& 73.26 {\scriptsize\color{gray}$\pm$ 1.38} & 75.33\\
RoLoRA   & 87.60 {\scriptsize \color{gray} $\pm$ 2.21} & 73.16 {\scriptsize \color{gray} $\pm$ 2.11} & 74.40 {\scriptsize \color{gray} $\pm$ 0.89} & 72.01 {\scriptsize \color{gray} $\pm$ 1.55} & 72.50 {\scriptsize \color{gray} $\pm$ 3.56} & 73.47 {\scriptsize \color{gray} $\pm$ 0.41} & 75.52 \\
AS-LoRA  & 88.26 {\scriptsize \color{gray} $\pm$ 0.67} & 72.30 {\scriptsize \color{gray} $\pm$ 1.06} & 73.57 {\scriptsize \color{gray} $\pm$ 2.15} & 71.41 {\scriptsize \color{gray} $\pm$ 1.88} & 72.60 {\scriptsize \color{gray} $\pm$ 1.51} & 75.23 {\scriptsize \color{gray} $\pm$ 2.57} & 75.56\\
FLoRA    & 89.27 {\scriptsize \color{gray} $\pm$ 0.20} & 71.46 {\scriptsize \color{gray} $\pm$ 1.24} & 76.81 {\scriptsize \color{gray} $\pm$ 2.74} & 70.56 {\scriptsize \color{gray} $\pm$ 0.82} & 71.52 {\scriptsize \color{gray} $\pm$ 0.21} & \textbf{76.43} {\scriptsize \color{gray} $\pm$ 2.74} & 76.01\\
\midrule
\textbf{CFLoRA} & \textbf{90.61} {\scriptsize\color{gray}$\pm$ 0.25} & \textbf{73.53} {\scriptsize\color{gray}$\pm$ 1.59} & \textbf{78.95} {\scriptsize\color{gray}$\pm$ 1.48} & \textbf{79.16} {\scriptsize\color{gray}$\pm$ 1.22} & 79.85 {\scriptsize\color{gray}$\pm$ 1.01} & 76.30 {\scriptsize\color{gray}$\pm$ 0.36} & \textbf{79.73}\\
\bottomrule
\end{tabular}
}
\end{table*}

% ------------------------ Table GLUE Hete 10-of-25 ----------------------
\begin{table*}[!t]
\centering
% \resizebox{\textwidth}{!}{
% \setlength{\tabcolsep}{3mm}  % 保持原压缩
\caption{Testing accuracy for fine-tuning the RoBERTa model on the GLUE benchmark under \textit{heterogeneous} client ranks.}
\label{table:main_results_robert_hete_10of25}
\resizebox{\textwidth}{!}{
\renewcommand{\arraystretch}{1}
\begin{tabular}{l|cccccc|c}
\toprule
\textbf{Method} & \textbf{QNLI} & \textbf{MRPC} & \textbf{CoLA} & \textbf{MNLI-M} & \textbf{MNLI-MM} & \textbf{QQP} & \textbf{Avg.} \\
\midrule
FedIT    & 70.34 {\scriptsize \color{gray} $\pm$ 2.01} & 69.60 {\scriptsize \color{gray} $\pm$ 2.43} & 73.56 {\scriptsize \color{gray} $\pm$ 3.67} & 58.34 {\scriptsize \color{gray} $\pm$ 1.03} & 58.22 {\scriptsize \color{gray} $\pm$ 0.93} &  73.12 {\scriptsize \color{gray} $\pm$ 1.41} & 67.20
\\
FFA-LoRA & 86.86 {\scriptsize \color{gray} $\pm$ 2.15} & 70.67 {\scriptsize \color{gray} $\pm$ 2.31} & 73.09 {\scriptsize \color{gray} $\pm$ 2.97} & 63.74 {\scriptsize \color{gray} $\pm$ 5.02} & 64.04 {\scriptsize \color{gray} $\pm$ 4.88} & 75.91 {\scriptsize \color{gray} $\pm$ 1.01}& 72.39
\\
RoLoRA   & 84.87 {\scriptsize \color{gray} $\pm$ 3.10} & 68.46 {\scriptsize \color{gray} $\pm$ 3.12} & 73.14 {\scriptsize \color{gray} $\pm$ 3.40} & 51.19 {\scriptsize \color{gray} $\pm$ 3.95} & 51.60 {\scriptsize \color{gray} $\pm$ 2.84} & 74.73 {\scriptsize \color{gray} $\pm$ 1.29}& 
67.33
\\
AS-LoRA  & 80.71 {\scriptsize \color{gray} $\pm$ 2.52} & 71.12 {\scriptsize \color{gray} $\pm$ 1.33} & 73.09 {\scriptsize \color{gray} $\pm$ 2.97} & 63.60 {\scriptsize \color{gray} $\pm$ 4.86} & 63.95 {\scriptsize \color{gray} $\pm$ 5.94} & 70.73 {\scriptsize \color{gray} $\pm$ 1.94} &
70.53
\\
FLoRA     & 87.35 {\scriptsize \color{gray} $\pm$ 1.84} & 69.80 {\scriptsize \color{gray} $\pm$ 1.71} & 75.75 {\scriptsize \color{gray} $\pm$ 1.34} & 72.22 {\scriptsize \color{gray} $\pm$ 1.26} & 70.97 {\scriptsize \color{gray} $\pm$ 1.14}  & 76.61 {\scriptsize \color{gray} $\pm$ 0.53} & 75.45
\\
FlexLoRA  & 86.66 {\scriptsize \color{gray} $\pm$ 1.55} & 71.92 {\scriptsize \color{gray} $\pm$ 1.19} &  77.60 {\scriptsize \color{gray} $\pm$ 0.58} & 71.35 {\scriptsize \color{gray} $\pm$ 1.49} & 72.25 {\scriptsize \color{gray} $\pm$ 1.48} & 76.40 {\scriptsize \color{gray} $\pm$ 1.32}
& 76.03 
\\
FSLoRA    & 86.68 {\scriptsize \color{gray} $\pm$ 2.27} & \textbf{75.95} {\scriptsize \color{gray} $\pm$ 1.34} & 74.95 {\scriptsize \color{gray} $\pm$ 0.52} & 72.55 {\scriptsize \color{gray} $\pm$ 1.57} & 73.49 {\scriptsize \color{gray} $\pm$ 1.62} & \textbf{79.58} {\scriptsize \color{gray} $\pm$ 1.29} & 77.20 
\\
\midrule
\textbf{CFLoRA} & \textbf{91.29} {\scriptsize\color{gray}$\pm$ 0.45} & 74.86 {\scriptsize\color{gray}$\pm$ 0.69} & \textbf{77.80} {\scriptsize\color{gray}$\pm$ 1.26} & \textbf{76.82} {\scriptsize\color{gray}$\pm$ 0.80} & \textbf{77.32} {\scriptsize\color{gray}$\pm$ 1.14} & 79.34 {\scriptsize \color{gray} $\pm$ 0.87} & \textbf{79.57}
\\
\bottomrule
\end{tabular}
}
\end{table*}

\section{Experiments}
\label{sec:experiments}

\paragraph{Implementation Details and Baselines.}
Our evaluation focuses on RoBERTa (125M) \citep{liu2019roberta} on the GLUE benchmark \citep{wang2018glue} and LLaMA-3.2-3B-Instruct \citep{grattafiori2024llama} on commonsense reasoning tasks \citep{hu2023llm}. Following prior work \citep{zhang2024federatedgpt, fang2026fslora}, we adopt Dirichlet-based partitioning for non-IID dataset splits. 
In our default evaluation, we consider a pool of 25 clients and randomly select 10 per round for training. We also present an extended analysis by varying the number of clients in Appendix~\ref{app:full_participation}. All results are averaged over 4 independent seeds. Further experimental details are provided in Appendix~\ref{app:exp_details}. 

We compare \texttt{CFLoRA} against several state-of-the-art federated LoRA baselines, including FedIT \citep{zhang2024federatedgpt}, the standard factor-wise averaging baseline; FFA-LoRA \citep{sun2024ffa}, RoLoRA \citep{chen2025rolora}, FLoRA \citep{wang2024flora}, and AS-LoRA \citep{kim2026adaptive}, which are designed to provide error-free aggregation; and FlexLoRA \citep{bai2024flexlora} and FSLoRA \citep{fang2026fslora}, which are explicitly designed to accommodate heterogeneous client ranks.

% \subsection{Main Results}

\subsection{Main Results}

% \textbf{Comparison under Homogeneous Client Ranks.}
% Table~\ref{table:main_results_llama_homo_10of25} summarizes the
% results for fine-tuning LLaMA-3.2-3B-Instruct on commonsense reasoning
% tasks under homogeneous client ranks. \texttt{CFLoRA} achieves the
% highest average accuracy of $74.28\%$, outperforming the strongest
% baseline, FLoRA, by $1.45$ percentage points and standard federated
% LoRA (FedIT) by $3.25$ points. 

% \textbf{Comparison under Heterogeneous Client Ranks.}
% Table~\ref{table:main_results_llma_reasoning_hete_10of25} reports
% the corresponding results under heterogeneous client ranks.
% \texttt{CFLoRA} achieves the highest average accuracy of $75.36\%$,
% exceeding FSLoRA, the strongest baseline, by $2.16$ percentage points.
% It also outperforms FLoRA and FlexLoRA by $2.76$ and $3.77$ points,
% respectively. \texttt{CFLoRA} obtains the highest scores on four
% of the six tasks, with particularly clear gains on PIQA and SIQA,
% where it exceeds the best baseline scores by $3.77$ and $4.37$
% points, respectively. These results demonstrate the effectiveness
% of complementary updates when clients train compact adapters
% with different ranks.

\textbf{Comparison under Homogeneous Client Ranks.}
Table~\ref{table:main_results_robert_homo_10of25} summarizes the
results for fine-tuning RoBERTa on the GLUE benchmark under
homogeneous client ranks. \texttt{CFLoRA} achieves the highest
average accuracy of $79.73\%$, outperforming the strongest baseline,
FedIT, by $3.04$ percentage points. It obtains the highest scores
on QNLI, MRPC, CoLA, and MNLI-M, with improvements of $1.34$ and
$2.14$ points over the best baseline scores on QNLI and CoLA,
respectively. It also remains competitive on MNLI-MM and QQP.
These results demonstrate that complementary updates enable effective adaptation despite training fewer adapter parameters per round. 

\textbf{Comparison under Heterogeneous Client Ranks.}
Table~\ref{table:main_results_robert_hete_10of25} reports the
corresponding results under heterogeneous client ranks.
\texttt{CFLoRA} achieves the highest average accuracy of $79.57\%$,
exceeding FSLoRA, the strongest baseline, by $2.37$ percentage
points. It achieves the highest scores
on QNLI, CoLA, MNLI-M, and MNLI-MM, with gains of $3.94$, $4.27$,
and $3.83$ points over the best baseline scores on QNLI, MNLI-M,
and MNLI-MM, respectively. These results support the effectiveness
of complementary updates when clients train compact adapters
with different rank budgets. Additional results for
LLaMA-3.2-3B-Instruct under homogeneous and heterogeneous ranks
are reported in Tables~\ref{table:main_results_llama_homo_10of25}
and~\ref{table:main_results_llma_reasoning_hete_10of25},
respectively, and discussed in Appendix~\ref{app:llama_reasoning_hete}.

\textbf{Parameter and Communication Efficiency.}
Importantly, these accuracy improvements of \texttt{CFLoRA} are accompanied by reduced resource costs. Figure~\ref{fig:Comm} (in Appendix~\ref{app:implementation_complexity}) illustrates the trade-off between accuracy and communication cost over $200$ rounds. On MNLI, \texttt{CFLoRA} reaches approximately $70\%$ matched accuracy with only about $8$~GB of cumulative communication, whereas FSLoRA requires roughly $14$~GB to attain comparable performance. On QNLI, \texttt{CFLoRA} exceeds $90\%$ accuracy at approximately $8$~GB, also making it significantly more communication-efficient than the baselines. In comparison, FLoRA is not communication-efficient because its effective rank is the sum of all client ranks. Overall, these results demonstrate that \texttt{CFLoRA} achieves strong accuracy within smaller communication budget. A detailed cost analysis is provided in Appendix~\ref{app:implementation_complexity}.

\begin{table}[t]
\centering
\caption{Ablation study on selection probability $p$ for RoBERTa on GLUE tasks under homogeneous client ranks.}
\label{table:ablation_robert_homo_10of25}
\resizebox{\textwidth}{!}{
\renewcommand{\arraystretch}{1}
\begin{tabular}{l|cccccc|c}
\toprule
\textbf{Schedule} & \textbf{QNLI} & \textbf{MRPC} & \textbf{CoLA}
& \textbf{MNLI-M} & \textbf{MNLI-MM} & \textbf{QQP} & \textbf{Avg.} \\
\midrule
$p=0.3$ & 87.80 
{\scriptsize\color{gray}$\pm$ 0.02} & 70.83 {\scriptsize\color{gray}$\pm$ 1.73} & 73.33 {\scriptsize\color{gray}$\pm$ 2.37} & 71.15 {\scriptsize\color{gray}$\pm$ 1.07} & 72.73 {\scriptsize\color{gray}$\pm$ 1.03} & 72.45 {\scriptsize\color{gray}$\pm$ 1.39} & 74.72 \\
$p=0.5$ & 90.64 {\scriptsize\color{gray}$\pm$ 0.25} & 72.43 {\scriptsize\color{gray}$\pm$ 2.50} & 78.72 {\scriptsize\color{gray}$\pm$ 1.73} & 77.82 {\scriptsize\color{gray}$\pm$ 0.49} & 78.00 {\scriptsize\color{gray}$\pm$ 0.12} & 74.57 {\scriptsize\color{gray}$\pm$ 0.43} & 78.70
\\
$p=0.7$ & \textbf{90.69} {\scriptsize\color{gray}$\pm$ 0.24} & 72.55 {\scriptsize\color{gray}$\pm$ 2.06} & \textbf{79.81} {\scriptsize\color{gray}$\pm$ 1.52} & 77.47 {\scriptsize\color{gray}$\pm$ 0.61} & 78.02 {\scriptsize\color{gray}$\pm$ 0.65} & 74.52 {\scriptsize\color{gray}$\pm$ 1.43} & 78.84
\\
$p=0.9$ & 90.61 {\scriptsize\color{gray}$\pm$ 0.25} & \textbf{73.53} {\scriptsize\color{gray}$\pm$ 1.59} & 78.95 {\scriptsize\color{gray}$\pm$ 1.48} & \textbf{79.16} {\scriptsize\color{gray}$\pm$ 1.22} & \textbf{79.85} {\scriptsize\color{gray}$\pm$ 1.01} & \textbf{76.30} {\scriptsize\color{gray}$\pm$ 0.36} & \textbf{79.73}
\\
$p=1.0$ & 88.61 {\scriptsize\color{gray}$\pm$ 0.27}
& 70.47 {\scriptsize\color{gray}$\pm$ 1.54}
& 73.89 {\scriptsize\color{gray}$\pm$ 1.77}
& 72.36 {\scriptsize\color{gray}$\pm$ 3.23}
& 73.54 {\scriptsize\color{gray}$\pm$ 3.13}
& 73.17 {\scriptsize\color{gray}$\pm$ 1.39}
& 75.34
\\
\bottomrule
\end{tabular}
}
\end{table}

% ----------------------- p on GLUE tasks hete --------------------
\begin{table}[t]
\centering
\caption{Ablation study on selection probability $p$ for RoBERTa on GLUE tasks under heterogeneous client ranks.}
\label{table:ablation_robert_hete_10of25}
\resizebox{\textwidth}{!}{
\renewcommand{\arraystretch}{1}
\begin{tabular}{l|cccccc|c}
\toprule
\textbf{Schedule} & \textbf{QNLI} & \textbf{MRPC} & \textbf{CoLA} & \textbf{MNLI-M} & \textbf{MNLI-MM} & \textbf{QQP} & \textbf{Avg.} \\
\midrule
$p=0.3$ & 88.31 {\scriptsize\color{gray}$\pm$ 0.99} & 68.79 {\scriptsize\color{gray}$\pm$ 0.35} & 71.46 {\scriptsize\color{gray}$\pm$ 1.27} & 68.78 {\scriptsize\color{gray}$\pm$ 0.49} & 69.66 {\scriptsize\color{gray}$\pm$ 1.29} & 73.21 {\scriptsize\color{gray}$\pm$ 1.15} & 73.37\\
$p=0.5$ & 91.16 {\scriptsize\color{gray}$\pm$ 0.10} & 73.98 {\scriptsize\color{gray}$\pm$ 2.26} & 76.90 {\scriptsize\color{gray}$\pm$ 2.38} & 76.37 {\scriptsize\color{gray}$\pm$ 1.02} & 76.85 {\scriptsize\color{gray}$\pm$ 1.30} & 77.16 {\scriptsize\color{gray}$\pm$ 1.28} & 78.74
\\
$p=0.7$ & \textbf{91.41} {\scriptsize\color{gray}$\pm$ 0.87} & 74.13 {\scriptsize\color{gray}$\pm$ 1.04} & 77.32 {\scriptsize\color{gray}$\pm$ 0.85} & 76.74 {\scriptsize\color{gray}$\pm$ 1.37} & \textbf{77.51} {\scriptsize\color{gray}$\pm$ 0.73} & \textbf{79.61} {\scriptsize\color{gray}$\pm$ 1.07}
& 79.45 \\
$p=0.9$ & 91.29 {\scriptsize\color{gray}$\pm$ 0.45} & \textbf{74.86} {\scriptsize\color{gray}$\pm$ 0.69} & \textbf{77.80} {\scriptsize\color{gray}$\pm$ 1.26} & \textbf{76.82} {\scriptsize\color{gray}$\pm$ 0.80} & 77.32 {\scriptsize\color{gray}$\pm$ 1.14}  & 79.34 {\scriptsize \color{gray} $\pm$ 0.87} & \textbf{79.57}
\\
$p=1.0$ & 89.90 {\scriptsize\color{gray}$\pm$ 1.39} & 73.00 {\scriptsize\color{gray}$\pm$ 2.02} & 74.51 {\scriptsize\color{gray}$\pm$ 1.42} & 75.25 {\scriptsize\color{gray}$\pm$ 1.57} & 75.81 {\scriptsize\color{gray}$\pm$ 0.91} & 73.63 {\scriptsize\color{gray}$\pm$ 1.95} & 77.02 \\
\bottomrule
\end{tabular}
}
\end{table}

% \subsection{Parameter and Communication Efficiency}
% \label{subsec:exp_efficiency}
% Moreover, by updating only one factor component per active channel, \texttt{CFLoRA} halves the number of trainable parameters per round compared to full-factor methods. This maintains competitive adaptation capacity with computational and communication savings.

\subsection{Ablation Study: Impact of channel sampling probability $p$}
\label{subsec:ablations}

In Table \ref{table:ablation_robert_homo_10of25}-Table \ref{table:ablation_robert_hete_10of25} and Table 
\ref{table:ablation_llama_homo_10of25} and Table \ref{table:ablation_llama_hete_10of25} (in Appendix \ref{app:ablation_study}), we conduct experiments with different $p$ on various models in both homogeneous and heterogeneous settings, which reveal a similar phenomenon. In addition to the theoretical finding discussed in Section \ref{sec:convergence} that low selection probabilities for either factor amplify the convergence bound, we empirically observe that \texttt{CFLoRA} remains good for different values of $p > 0.5$ (excluding the boundary case of $p=1.0$). While $p=0.9$ is generally the best, the performance robustness across a wide range of $p$ demonstrates \texttt{CFLoRA}'s general insensitivity to this hyperparameter, making it easy to configure in practice. More discussions are provided in Section \ref{app:ablation_study}.

% In Table~\ref{table:ablation_robert_homo_10of25}, Table\ref{table:ablation_robert_hete_10of25}, and Table\ref{table:ablation_llama_homo_10of25}, we compare adaptive allocation by varying $p$ from $0.3$ to $1$, where $p=1$ indeed match $\mathbf B$-only training, i.e., FFA-LoRA. As aforementioned, ....

% This validates our core motivation: simultaneously co-adapting selected components of both $\mathbf A$ and $\mathbf B$ is critical for maximizing federated adaptation performance.

\section{Conclusion}
We presented CFLoRA, a federated LoRA framework that can update both factors in each round while preserving exact factor-wise aggregation. Our key idea is to utilize a shared complementary mask to assign each latent channel to either the $\mathbf{A}$ or $\mathbf{B}$ factor, eliminating the bilinear terms that cause the aggregation mismatch. Furthermore, \texttt{CFLoRA} seamlessly extends to settings with heterogeneous client ranks by applying additional activation masks. Due to the exact factor-wise aggregation, our analysis establishes an $\mathcal{O}(1/\sqrt{T})$ stationarity rate for the original LoRA objective with homogeneous ranks and for a surrogate objective with heterogeneous ranks. Experiments with RoBERTa and LLaMA-3.2-3B-Instruct show strong performance while reducing trainable parameters and uplink communications.

\textbf{Limitations.} While our complementary-channel design preserves an $\mathcal{O}(1/\sqrt{T})$ convergence rate for the original LoRA objective under homogeneous ranks, the result only applies to a surrogate objective in heterogeneous scenarios. This theoretical gap remains open. Additionally, there is a discrepancy between the worst-case convergence bound, which is minimized at channel selection probability $p=0.5$, and our empirical findings, which favor an asymmetric allocation (e.g., $p=0.9$). Developing a tighter theoretical framework that fully captures the asymmetric roles of factors remains an area for future exploration.

\bibliography{iclr2027_conference}
\bibliographystyle{iclr2027_conference}

% ---------------------------- Appendix ------------------------------
\newpage
\appendix

\section*{Appendix}
\startcontents[sections]
\printcontents[sections]{l}{1}{\setcounter{tocdepth}{2}}
\vspace{1cm} % Adds a little breathing room before the first section

\newpage
\section{Proof of the Theoretical Results} \label{app:all_proof}
\subsection{Proof of Proposition \ref{prop:exact}}
\label{app:proof_exact_agg}

Expanding the product of the aggregated factors yields
\begin{equation}
\begin{aligned}
&
% \mathbf B^{t+1}\mathbf A^{t+1}
% =
\left(\mathbf B^t+\sum_i\frac{w_i}{q}\Delta\mathbf B_i^t\mathbf Z^t\right)
\left(\mathbf A^t+\sum_i\frac{w_i}{q}(\mathbf I_R-\mathbf Z^t)\Delta\mathbf A_i^t\right) \\
=&
\mathbf B^t\mathbf A^t
+\sum_i\frac{w_i}{q}\Delta\mathbf B_i^t\mathbf Z^t\mathbf A^t
+\sum_i\frac{w_i}{q}\mathbf B^t(\mathbf I_R-\mathbf Z^t)\Delta\mathbf A_i^t
+\sum_{i,k}\frac{w_iw_k}{q^2}\Delta\mathbf B_i^t\mathbf Z^t(\mathbf I_R-\mathbf Z^t)\Delta\mathbf A_k^t \\
=&
\mathbf B^t\mathbf A^t
+\sum_i\frac{w_i}{q}\Delta\mathbf B_i^t\mathbf Z^t\mathbf A^t
+\sum_i\frac{w_i}{q}\mathbf B^t(\mathbf I_R-\mathbf Z^t)\Delta\mathbf A_i^t.
\label{eq:server_product_expansion}
\end{aligned}    
\end{equation}
Each client endpoint satisfies
\begin{equation}
     \mathbf B_i^{t+1}\mathbf A_i^{t+1}
    = (\mathbf{B}^t + \Delta\mathbf{B}_i^t \mathbf{Z}^t)
      (\mathbf{A}^t + (\mathbf{I} - \mathbf{Z}^t)\Delta\mathbf{A}_i^t)
    =
    \mathbf B^t\mathbf A^t
    +\Delta\mathbf B_i^t\mathbf Z^t\mathbf A^t
    +\mathbf B^t(\mathbf I_R-\mathbf Z^t)\Delta\mathbf A_i^t,   
\end{equation}
Subtracting $\mathbf{B}^{t}\mathbf{A}^{t}$ from each client endpoint product and summing over $i\in\mathcal{S}_t$ with weights $w_i/q$ yields exactly the two linear terms in the expansion of the aggregated factors. This completes the proof.

% ------------------- Proof of Proposition (Hete) Exact Aggregation -----------------
\subsection{Proof of Proposition \ref{prop:heterogeneous_exact}}
\label{app:proof_hete_exact_agg}
Expanding the product of the aggregated factors therefore yields
\begin{equation}
\begin{aligned}
&
% \mathbf B^{t+1}\mathbf A^{t+1}
% =
\left(\mathbf B^t+\sum_i\frac{w_i}{q}\Delta\widetilde{\mathbf B}_i^t(\mathbf E_i^t)^\top\right)
\left(\mathbf A^t+\sum_i\frac{w_i}{q}\mathbf E_i^t\Delta\widetilde{\mathbf A}_i^t\right) \\
=&
\mathbf B^t\mathbf A^t
+\sum_i\frac{w_i}{q}\Delta\widetilde{\mathbf B}_i^t(\mathbf E_i^t)^\top\mathbf A^t
+\sum_i\frac{w_i}{q}\mathbf B^t\mathbf E_i^t\Delta\widetilde{\mathbf A}_i^t
+\sum_{i,k}\frac{w_iw_k}{q^2}\Delta\widetilde{\mathbf B}_i^t(\mathbf E_i^t)^\top\mathbf Z^t(\mathbf I_R-\mathbf Z^t)\mathbf E_k^t\Delta\widetilde{\mathbf A}_k^t \\
=&
\mathbf B^t\mathbf A^t
+\sum_i\frac{w_i}{q}\Delta\widetilde{\mathbf B}_i^t(\mathbf E_i^t)^\top\mathbf A^t
+\sum_i\frac{w_i}{q}\mathbf B^t\mathbf E_i^t\Delta\widetilde{\mathbf A}_i^t.
\label{eq:heterogeneous_server_product_expansion}
\end{aligned}
\end{equation}

For the local models, each client's compact endpoint satisfies
\begin{equation}
\begin{aligned}
     \widetilde{\mathbf B}_i^{t,K}\widetilde{\mathbf A}_i^{t,K}
    &= (\mathbf B^t\mathbf E_i^t + \Delta\widetilde{\mathbf B}_i^t)
       ((\mathbf E_i^t)^\top\mathbf A^t + \Delta\widetilde{\mathbf A}_i^t) \\
    &= \mathbf B^t\mathbf H_i^t\mathbf A^t
    +\Delta\widetilde{\mathbf B}_i^t(\mathbf E_i^t)^\top\mathbf A^t
    +\mathbf B^t\mathbf E_i^t\Delta\widetilde{\mathbf A}_i^t
    +\Delta\widetilde{\mathbf B}_i^t(\mathbf E_i^t)^\top\mathbf Z^t(\mathbf I_R-\mathbf Z^t)\mathbf E_i^t\Delta\widetilde{\mathbf A}_i^t \\
    &= \mathbf B^t\mathbf H_i^t\mathbf A^t
    +\Delta\widetilde{\mathbf B}_i^t(\mathbf E_i^t)^\top\mathbf A^t
    +\mathbf B^t\mathbf E_i^t\Delta\widetilde{\mathbf A}_i^t.
\end{aligned}
\end{equation}
Subtracting $\mathbf B^t\mathbf H_i^t\mathbf A^t$ and taking the weighted summation over all clients yields exactly the two linear terms in \eqref{eq:heterogeneous_server_product_expansion}. Substituting this weighted sum back into \eqref{eq:heterogeneous_server_product_expansion} reconstructs the right-hand side of \eqref{eq:heterogeneous_exactness}, which concludes the proof.

\subsection{Assumptions}
\label{app:assumptions}

\setcounter{assumption}{0}
\renewcommand{\theassumption}{A.\arabic{assumption}}

We adopt standard assumptions widely used in the analysis of non-convex federated optimization.

\begin{assumption}\label{assump:smooth}
For each client $i$ and activation mask
$\mathbf H \in \operatorname{supp}(\mathcal H_i)$,
the local objective $f_i^{\mathbf H}(\mathbf X)$ is differentiable and $L$-smooth over a region $\mathcal C$ containing the iterates; that is, there exists a constant $L>0$ such that for all $\mathbf X,\mathbf X'\in\mathcal C$,
\begin{equation}
   \|\nabla f_i^{\mathbf H}(\mathbf X)
-\nabla f_i^{\mathbf H}(\mathbf X')\|
\le L\|\mathbf X-\mathbf X'\|,
\quad\forall i. 
\end{equation}
Consequently, the global objective $\Phi(\mathbf X)$ is also
$L$-smooth on $\mathcal C$. Moreover, there exists a finite lower bound
$\Phi^\star$ such that $\Phi(\mathbf X)\ge \Phi^\star$ for all
$\mathbf X\in\mathcal C$.
\end{assumption}

\begin{assumption}\label{assump:unbias}
For each client $i$ and any activation mask $\mathbf H \in \operatorname{supp}(\mathcal H_i)$, the stochastic gradient is unbiased, i.e.,
\begin{equation}
    \mathbb E_{\xi\sim\mathcal D_i}
\left[
\nabla_{\mathbf X}
\ell(\mathbf W_0+\mathbf B\mathbf H\mathbf A;\xi)
\right]
=
\nabla f_i^{\mathbf H}(\mathbf X).
\end{equation}
Furthermore, there exist constants $\sigma_i^2,\tau_i^2\ge0$ bounding the variances of minibatch sampling and mask sampling, respectively, and a uniformly bounded second moment $M>0$:
\begin{align}
\sup_{\substack{\mathbf X\in\mathcal C,\,
\mathbf H\in\operatorname{supp}(\mathcal H_i)}}
\mathbb E_{\xi\sim\mathcal D_i}
\left[
\left\|
\nabla_{\mathbf X}
\ell(\mathbf W_0+\mathbf B\mathbf H\mathbf A;\xi)
-\nabla f_i^{\mathbf H}(\mathbf X)
\right\|^2
\right]
&\le\sigma_i^2,
 \\
\sup_{\mathbf X\in\mathcal C}
\mathbb E_{\mathbf H\sim\mathcal H_i}
\left[
\left\|
\nabla f_i^{\mathbf H}(\mathbf X)
-\nabla\phi_i(\mathbf X)
\right\|^2
\right]
&\le\tau_i^2,
\\
\sup_{\substack{\mathbf X\in\mathcal C,\,
\mathbf H\in\operatorname{supp}(\mathcal H_i)}}
\mathbb E_{\xi\sim\mathcal D_i}
\left[
\left\|
\nabla_{\mathbf X}
\ell(\mathbf W_0+\mathbf B\mathbf H\mathbf A;\xi)
\right\|^2
\right]
&\le M. 
\end{align}
\end{assumption}

Assumption~\ref{assump:smooth} is standard in optimization literature
\citep{bottou2018optimization, fang2026fslora}.
Assumption~\ref{assump:unbias} is commonly adopted in FL to bound sampling randomness and stochastic-gradient moments
\citep{fang2026fslora}.

\subsection{Lemma~\ref{lem:compact_unbiased} and Its Proof}
\label{app:proof_lemma}

\setcounter{lemma}{0}
\renewcommand{\thelemma}{A.\arabic{lemma}}

\begin{lemma}
\label{lem:compact_unbiased}
Under Assumption~\ref{assump:unbias}, we have
\begin{equation}
\label{eq:compact_unbiased}
\mathbb E[\mathbf G_B^t]
=
\nabla_{\mathbf B}\Phi(\mathbf X^t),
\qquad
\mathbb E[\mathbf G_A^t]
=
\nabla_{\mathbf A}\Phi(\mathbf X^t),
\end{equation}
where $\mathbf G_B^t = \frac{1}{p}\mathbf S_B^t\mathbf Z^t$ and $\mathbf G_A^t = \frac{1}{1-p}(\mathbf I_R-\mathbf Z^t)\mathbf S_A^t$ denote the scaled aggregated gradients, and $\mathbf S_{Y}^t = \sum_{i\in\mathcal S_t}\frac{w_i}{q} \left. \nabla_{\mathbf Y} \ell(\mathbf W_0+\mathbf B\mathbf H_i^t\mathbf A;\xi_i^{t,0}) \right|_{\mathbf X=\mathbf X^t}$ are the pre-selection aggregated gradients at the common round-start iterate $\mathbf X^t$ for $Y \in \{B, A\}$, with local minibatch sample $\xi_i^{t,0}\sim\mathcal D_i$.
\end{lemma}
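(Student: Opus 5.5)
The plan is to compute the expectation by conditioning in layers, using the independence structure in the sampling model. There are four independent sources of randomness: the participating set $\mathcal S_t$, the complementary mask $\mathbf Z^t$, the activation masks $\mathbf H_i^t\sim\mathcal H_i$, and the minibatch samples $\xi_i^{t,0}\sim\mathcal D_i$. The mask $\mathbf Z^t$ enters $\mathbf G_B^t$ only through right-multiplication, and $\mathbf G_A^t$ only through left-multiplication by $\mathbf I_R-\mathbf Z^t$. The iterate $\mathbf X^t$ is fixed given the past. So I would condition on $\mathcal F_t$, the history up to the start of round $t$, and prove the conditional version $\mathbb E[\mathbf G_B^t\mid\mathcal F_t]=\nabla_{\mathbf B}\Phi(\mathbf X^t)$; the unconditional statement then follows by the tower property.

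\textbf{Step 1: the mask $\mathbf Z^t$.} Because $\mathbf Z^t$ is independent of $\mathbf S_B^t$, I would write $\mathbb E[\mathbf S_B^t\mathbf Z^t]=\mathbb E[\mathbf S_B^t]\,\mathbb E[\mathbf Z^t]$. Entrywise, $\mathbb E[z_j^t]=p$, so $\mathbb E[\mathbf Z^t]=p\mathbf I_R$. The factor $1/p$ cancels this, giving $\mathbb E[\mathbf G_B^t]=\mathbb E[\mathbf S_B^t]$. Likewise $\mathbb E[\mathbf I_R-\mathbf Z^t]=(1-p)\mathbf I_R$ gives $\mathbb E[\mathbf G_A^t]=\mathbb E[\mathbf S_A^t]$. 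This uses $0<p<1$.

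\textbf{Step 2: client sampling, minibatch and activation masks.} I would write the sum over $\mathcal S_t$ as $\sum_{i=1}^N \mathbf 1\{i\in\mathcal S_t\}\frac{w_i}{q}(\cdot)$. Under uniform sampling of $m$ clients, $\Pr(i\in\mathcal S_t)=m/N=q$, and $\mathcal S_t$ is independent of $\xi$ and $\mathbf H$. Hence
\[
\mathbb E[\mathbf S_B^t]=\sum_{i=1}^N w_i\,\mathbb E_{\mathbf H_i^t}\mathbb E_{\xi}\bigl[\nabla_{\mathbf B}\ell(\mathbf W_0+\mathbf B\mathbf H_i^t\mathbf A;\xi)\bigr]\Big|_{\mathbf X^t}.
\]
Assumption~\ref{assump:unbias} turns the inner expectation into $\nabla_{\mathbf B}f_i^{\mathbf H_i^t}(\mathbf X^t)$.

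\textbf{Step 3: pass gradient through the mask expectation.} Since $\mathcal H_i$ has finite support and each $f_i^{\mathbf H}$ is differentiable by Assumption~\ref{assump:smooth}, $\mathbb E_{\mathbf H}[\nabla f_i^{\mathbf H}]=\nabla\phi_i$. Summing with weights $w_i$ then gives $\nabla_{\mathbf B}\Phi(\mathbf X^t)$. The same argument gives $\nabla_{\mathbf A}\Phi(\mathbf X^t)$.

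The main subtlety is Step 1. The independence of $\mathbf Z^t$ from the gradients is essential here: the gradients are evaluated at the round-start iterate $\mathbf X^t$, before any masked local steps. This is why the lemma is stated only for the step-$0$ gradients. I would make that explicit, and note that for later local steps the iterates depend on $\mathbf Z^t$, so the lemma does not extend to them.
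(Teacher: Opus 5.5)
Your proposal is correct and uses the same argument as the paper. You remove $\mathbf Z^t$ using its independence from the round-start gradients and $\mathbb E[\mathbf Z^t]=p\mathbf I_R$, use $\Pr(i\in\mathcal S_t)=q$ with independence of client sampling, activation masks and minibatches, and then apply Assumption~\ref{assump:unbias} and average over $\mathcal H_i$ to reach $\nabla\Phi(\mathbf X^t)$; the paper only reverses the order, averaging over clients and masks before removing $\mathbf Z^t$. Conditioning on $\mathcal F_t$ is the right reading since $\nabla_{\mathbf B}\Phi(\mathbf X^t)$ is random, but the tower property then yields $\mathbb E[\mathbf G_B^t]=\mathbb E[\nabla_{\mathbf B}\Phi(\mathbf X^t)]$, not the identity as literally written.
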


\begin{proof}
Let $I_i^t=\mathbf 1\{i\in\mathcal S_t\}$,
$q=m/N$, and $\alpha_i^t=w_iI_i^t/q$.
Uniform client sampling gives $\mathbb E[I_i^t]=q$.
By the independence of client sampling and activation masks, we have 
\begin{equation}
\mathbb E[\mathbf S_B^t]
=
\sum_{i}\frac{w_i}{q}\mathbb E[I_i^t]
\mathbb E_{\mathbf H\sim\mathcal H_i}
\bigl[\nabla_{\mathbf B}f_i^{\mathbf H}(\mathbf X^t)\bigr]
=
\nabla_{\mathbf B}\Phi(\mathbf X^t).
\end{equation}
Similarly, $\mathbb E[\mathbf S_A^t]
=
\nabla_{\mathbf A}\Phi(\mathbf X^t)$ holds.
Since $\mathbf Z^t$ is conditionally independent of
$\mathbf S_{B}^t,\mathbf S_{A}^t$ and
$\mathbb E[\mathbf Z^t]=p\mathbf I_R$,
iterated expectations yield
\begin{equation}
\begin{aligned}
\mathbb E[\mathbf G_B^t]
&=
\frac{1}{p}\mathbb E\!\left[
\mathbf S_B^t
\mathbb E[\mathbf Z^t\mid\mathbf S_B^t]
\right]
=
\nabla_{\mathbf B}\Phi(\mathbf X^t),
\\
\mathbb E[\mathbf G_A^t]
&=
\frac{1}{1-p}\mathbb E\!\left[
\mathbb E[\mathbf I_R-\mathbf Z^t
\mid\mathbf S_A^t]
\mathbf S_A^t
\right]
=
\nabla_{\mathbf A}\Phi(\mathbf X^t).
\end{aligned}
\end{equation}
This leads to~\eqref{eq:compact_unbiased}, which concludes the proof.
\end{proof}

% ----------------------- Proof of Theorem ---------------------
\subsection{Proof of Theorem~\ref{thm:convergence}}
\label{app:proof_convergence}
% We first show the following lemma characterizes the conditional mean and second moment of the scaled gradients, and then we prove the convergence behavior of \texttt{CFLoRA}.

\begin{proof}
Let
$
p_{\min}=\min\{p,1-p\},
\kappa=p_{\min}^{-1},
\gamma=\eta K,
\alpha_i^t=\frac{w_i}{q}\mathbf 1\{i\in\mathcal S_t\}, I_i^t=\mathbf 1\{i\in\mathcal S_t\}$,
where $p\in(0,1)$ is fixed and $q=m/N$.

\textbf{Step 1: Selection and client-sampling identities.}
To formalize the complementary channel selection mechanism in \texttt{CFLoRA}, let $\mathbf{u} = (\mathbf{u}_B, \mathbf{u}_A)$ denote an arbitrary partitioned variable in the parameter or gradient space. We define the random scaling operator $\mathcal{R}_t$ as:
$\mathcal{R}_t(\mathbf{u})
=
\left(
\frac{\mathbf{u}_B\mathbf{Z}^t}{p},
\frac{(\mathbf{I}_R-\mathbf{Z}^t)\mathbf{u}_A}{1-p}
\right)$.
By the definition of $\kappa = p_{\min}^{-1}$, for any $\mathbf{u}$, the operator's norm is bounded:
\begin{equation}\label{eq:proof_pp_operator}
\|\mathcal{R}_t(\mathbf{u})\| \le \kappa\|\mathbf{u}\|.
\end{equation}
Furthermore, for any $\mathcal{F}_t$-measurable $\mathbf{u}$, taking the expectation over the random mask yields unbiasedness and a bounded second moment:
\begin{equation}\label{eq:proof_pp_operator_moment}
\begin{aligned}
\mathbb{E}[\mathcal{R}_t(\mathbf{u})] 
= \mathbf{u}, 
\qquad
\mathbb{E}\|\mathcal{R}_t(\mathbf{u})\|^2 
= \frac{\|\mathbf{u}_B\|_F^2}{p} + \frac{\|\mathbf{u}_A\|_F^2}{1-p} \le \kappa\|\mathbf{u}\|^2.
\end{aligned}
\end{equation}

% Since $\mathbf{Z}^t$ is generated independently of the client sampling and local minibatches at round $t$, these identities continue to hold when conditioning on these concurrent random events.

% These identities also hold after conditioning on randomness independent
% of $\mathbf Z^t$ given $\mathcal F_t$.

Uniform client sampling gives $\mathbb E[I_i^t]=q$ and $\mathbb E[I_i^tI_j^t]
= \frac{m(m-1)}{N(N-1)}$,
and hence for vectors $\mathbf y_i$, we have
\begin{equation}\label{eq:proof_pp_sampling}
\begin{aligned}
\mathbb E\!\left[\sum_i\alpha_i^t\mathbf y_i\right]
&=\sum_iw_i\mathbf y_i,
\\
\mathbb E\left\|\sum_i\alpha_i^t\mathbf y_i\right\|^2
&=
\|\sum_iw_i\mathbf y_i\|^2
+
\frac{N-m}{m(N-1)}\left(
N\sum_iw_i^2\|\mathbf y_i\|^2
-
\|\sum_iw_i\mathbf y_i\|^2
\right).
\end{aligned}
\end{equation}
In particular, define $\omega_t=\sum_i\alpha_i^t$, we have
\begin{equation}\label{eq:proof_pp_weight_moments}
\mathbb E[\omega_t]=1,
\qquad
\mathbb E[\omega_t^2]
=
1+\frac{N-m}{m(N-1)}\left(N\sum_iw_i^2-1\right)
=
\chi_m.
\end{equation}

\textbf{Step 2: Endpoint decomposition and sampling noise.}
The lifted local updates satisfy
$\mathbf X_i^{t,0}=\mathbf X^t$ and
$\mathbf X_i^{t,k+1}
=\mathbf X_i^{t,k}-\eta\mathcal R_t(\mathbf g_i^{t,k})$,
where $\mathbf g_i^{t,k}$ is the stochastic gradient of
$f_i^{\mathbf H_i}$ at $\mathbf X_i^{t,k}$.
The server update becomes
$\mathbf X^{t+1}=\mathbf X^t-\gamma\mathbf v^t$,
with $\mathbf v^t$ defined as 
\begin{equation}
    \begin{aligned}
\mathbf v^t
&=
\mathcal R_t\left(
\sum_i\alpha_i^t\mathbf \nabla f_i^{\mathbf H}(\mathbf X_i^{t,0})
\right)+
\frac1K\sum_{k=0}^{K-1}\sum_i\alpha_i^t
\mathcal R_t(\nabla f_i^{\mathbf H}(\mathbf X_i^{t,k})-\mathbf \nabla f_i^{\mathbf H}(\mathbf X_i^{t,0})),
\\
&+
\frac1K\sum_{k=0}^{K-1}\sum_i\alpha_i^t
\mathcal R_t((\mathbf g_i^{t,k}-\nabla f_i^{\mathbf H}(\mathbf X_i^{t,k}))).
\end{aligned}
\end{equation}
Independent activation masks and~\eqref{eq:proof_pp_sampling} yield
\begin{equation}
\begin{aligned}
\mathbb E
\left\|\sum_i\alpha_i^t\mathbf \nabla f_i^{\mathbf H}(\mathbf X_i^{t,0})\right\|^2
&=
\mathbb E
\left\|\sum_i\alpha_i^t\nabla\phi_i(\mathbf X^t)\right\|^2
+
\frac1q\sum_iw_i^2
\mathbb E
\|\mathbf \nabla f_i^{\mathbf H}(\mathbf X_i^{t,0})-\nabla\phi_i(\mathbf X^t)\|^2
\\
&\le
\|\nabla\Phi(\mathbf X^t)\|^2+\frac{N-m}{m(N-1)}\zeta_w^2+\frac{\tau_w^2}{q}.
\end{aligned}
\end{equation}
where $\zeta_w^2
=
\sup_{\mathbf X\in\mathcal C}
\left\{
N\sum_{i}w_i^2
\|\nabla\phi_i(\mathbf X)\|^2
-
\|\nabla\Phi(\mathbf X)\|^2
\right\}$ and $\tau_w^2=\sum_{i=1}^{N}w_i^2\tau_i^2$. Since the round-start gradients are independent of $\mathbf Z^t$,
\begin{equation}\label{eq:proof_pp_roundstart}
\begin{aligned}
    \mathbb E\left[\mathcal R_t\left( \sum_i\alpha_i^t\mathbf \nabla f_i^{\mathbf H}(\mathbf X_i^{t,0}) \right)\right] & =\nabla\Phi(\mathbf X^t),
\\
\mathbb E\left\|\mathcal R_t\left( \sum_i\alpha_i^t\mathbf \nabla f_i^{\mathbf H}(\mathbf X_i^{t,0}) \right)\right\|^2
& \le
\kappa\left(
\|\nabla\Phi(\mathbf X^t)\|^2+\frac{N-m}{m(N-1)}\zeta_w^2+\frac{\tau_w^2}{q}
\right).
\end{aligned}
\end{equation}

Let $\mathcal{F}_{t,k}$ be the filtration extending $\mathcal{F}_t$ with all randomness up to local step $k$.
Fresh minibatches give
$\mathbb E[
(\mathbf g_i^{t,k}-\nabla f_i^{\mathbf H}(\mathbf X_i^{t,k}))\mid\mathcal F_{t,k}
]=0$
and 
$\mathbb E[
\|(\mathbf g_i^{t,k}-\nabla f_i^{\mathbf H}(\mathbf X_i^{t,k}))\|^2\mid\mathcal F_{t,k}
]
\le\sigma_i^2.$

Noise cross terms vanish across steps by conditional centering and
across clients by conditional independence. Since
$\mathbb E[(\alpha_i^t)^2]=w_i^2/q$,
\begin{equation}\label{eq:proof_pp_noise}
\begin{aligned}
    \mathbb E[\frac1K\sum_{k=0}^{K-1}\sum_i\alpha_i^t \mathcal R_t((\mathbf g_i^{t,k}-\nabla f_i^{\mathbf H}(\mathbf X_i^{t,k})))]&=0,
\\
\mathbb E\|\frac1K\sum_{k=0}^{K-1}\sum_i\alpha_i^t \mathcal R_t((\mathbf g_i^{t,k}-\nabla f_i^{\mathbf H}(\mathbf X_i^{t,k})))\|^2
&\le\frac{\kappa^2\sigma_w^2}{qK}, 
\\
\mathbb E\left\langle\mathcal R_t\left( \sum_i\alpha_i^t\mathbf \nabla f_i^{\mathbf H}(\mathbf X_i^{t,0}) \right),\frac1K\sum_{k=0}^{K-1}\sum_i\alpha_i^t \mathcal R_t((\mathbf g_i^{t,k}-\nabla f_i^{\mathbf H}(\mathbf X_i^{t,k})))\right\rangle&=0.
\end{aligned}
\end{equation}
The last identity uses that $\mathcal R_t\left( \sum_i\alpha_i^t\mathbf \nabla f_i^{\mathbf H}(\mathbf X_i^{t,0}) \right)$ is fixed given the selected
set and round masks.

\textbf{Step 3: Local drift.}
Write
$\mathbb E_{t,S}[\cdot]
=\mathbb E[\cdot\mid\mathcal F_t,\mathcal S_t]$.
Assumption~\ref{assump:unbias}
and~\eqref{eq:proof_pp_operator} imply
\begin{equation}\label{eq:proof_pp_displacement}
\begin{aligned}
\left(
\mathbb E_{t,S}
\|\mathbf X_i^{t,k}-\mathbf X^t\|^2
\right)^{1/2}
&\le
\eta\sum_{s=0}^{k-1}
\left(
\mathbb E_{t,S}
\|\mathcal R_t(\mathbf g_i^{t,s})\|^2
\right)^{1/2}
\\
&\le\eta k\kappa\sqrt M.
\end{aligned}
\end{equation}
Smoothness and the conditional $L^2$ triangle inequality give
\begin{equation}\label{eq:proof_pp_drift_second}
\begin{aligned}
&\left(\mathbb E_{t,S}\|\frac1K\sum_{k=0}^{K-1}\sum_i\alpha_i^t \mathcal R_t(\nabla f_i^{\mathbf H}(\mathbf X_i^{t,k})-\mathbf \nabla f_i^{\mathbf H}(\mathbf X_i^{t,0}))\|^2\right)^{1/2}\\
&\le
\frac{\kappa L}{K}\sum_{k,i}\alpha_i^t
\left(
\mathbb E_{t,S}
\|\mathbf X_i^{t,k}-\mathbf X^t\|^2
\right)^{1/2}
\\
&\le
\frac{\omega_t}{2}L\eta\kappa^2\sqrt M(K-1).
\end{aligned}
\end{equation}
For any $\mathcal F_t$-measurable unit vector $\mathbf z$,
self-adjointness and Cauchy--Schwarz yield
\begin{equation}\label{eq:proof_pp_drift_bias}
\begin{aligned}
&\left|
\langle\mathbf z,\mathbb E_{t,S}[\frac1K\sum_{k=0}^{K-1}\sum_i\alpha_i^t \mathcal R_t(\nabla f_i^{\mathbf H}(\mathbf X_i^{t,k})-\mathbf \nabla f_i^{\mathbf H}(\mathbf X_i^{t,0}))]\rangle
\right|
\\
&\le
\frac1K\sum_{k,i}\alpha_i^t
\left(
\mathbb E_{t,S}\|\mathcal R_t(\mathbf z)\|^2
\right)^{1/2}
\left(
\mathbb E_{t,S}
\|\nabla f_i^{\mathbf H}(\mathbf X_i^{t,k})-\mathbf \nabla f_i^{\mathbf H}(\mathbf X_i^{t,0})\|^2
\right)^{1/2}
\\
&\le
\frac{\omega_t}{2}L\eta\kappa^{3/2}\sqrt M(K-1).
\end{aligned}
\end{equation}
Define
$d
=
\frac14L^2\eta^2\kappa^3M(K-1)^2
=
\frac14L^2\gamma^2\kappa^3M
\left(1-\frac1K\right)^2$, 
averaging over $\mathcal S_t$
and using~\eqref{eq:proof_pp_weight_moments},
then taking the supremum over unit $\mathbf z$, gives
\begin{equation}\label{eq:proof_pp_drift}
\begin{aligned}
    \|\mathbb E[\frac1K\sum_{k=0}^{K-1}\sum_i\alpha_i^t \mathcal R_t(\nabla f_i^{\mathbf H}(\mathbf X_i^{t,k})-\mathbf \nabla f_i^{\mathbf H}(\mathbf X_i^{t,0}))]\|^2 & \le d,
\\
    \mathbb E\|\frac1K\sum_{k=0}^{K-1}\sum_i\alpha_i^t \mathcal R_t(\nabla f_i^{\mathbf H}(\mathbf X_i^{t,k})-\mathbf \nabla f_i^{\mathbf H}(\mathbf X_i^{t,0}))\|^2 & \le\chi_m\kappa d.
\end{aligned}
\end{equation}

\textbf{Step 4: Descent and telescoping.}
By~\eqref{eq:proof_pp_roundstart},
\eqref{eq:proof_pp_noise},
and~\eqref{eq:proof_pp_drift},
\begin{equation}\label{eq:proof_pp_direction}
\begin{aligned}
\mathbb E\|\mathbf v^t\|^2
\le&
2\mathbb E\|\mathcal R_t\left( \sum_i\alpha_i^t\mathbf \nabla f_i^{\mathbf H}(\mathbf X_i^{t,0}) \right)+\frac1K\sum_{k=0}^{K-1}\sum_i\alpha_i^t \mathcal R_t((\mathbf g_i^{t,k}-\nabla f_i^{\mathbf H}(\mathbf X_i^{t,k})))\|^2 \\
&+
2\mathbb E\|\frac1K\sum_{k=0}^{K-1}\sum_i\alpha_i^t \mathcal R_t(\nabla f_i^{\mathbf H}(\mathbf X_i^{t,k})-\mathbf \nabla f_i^{\mathbf H}(\mathbf X_i^{t,0}))\|^2
\\
\le &
2\kappa\|\nabla\Phi(\mathbf X^t)\|^2
+
2\mathcal V_K
+
2\chi_m\kappa d.
\end{aligned}
\end{equation}
Moreover,
$\mathbb E[\mathbf v^t]
=\nabla\Phi(\mathbf X^t)+\mathbb E[\frac1K\sum_{k=0}^{K-1}\sum_i\alpha_i^t \mathcal R_t(\nabla f_i^{\mathbf H}(\mathbf X_i^{t,k})-\mathbf \nabla f_i^{\mathbf H}(\mathbf X_i^{t,0}))]$
and $ab\le a^2/4+b^2$ imply
\begin{equation}\label{eq:proof_pp_alignment}
-\langle\nabla\Phi(\mathbf X^t),\mathbb E[\mathbf v^t]\rangle
\le
-\frac34\|\nabla\Phi(\mathbf X^t)\|^2+d.
\end{equation}
By $L$-smoothness and $\gamma\le(4L\kappa)^{-1}$,
\begin{align*}
\mathbb E[\Phi(\mathbf X^{t+1})]
&\le
\Phi(\mathbf X^t)
-
\gamma\langle\nabla\Phi(\mathbf X^t),\mathbb E[\mathbf v^t]\rangle
+
\frac{L\gamma^2}{2}\mathbb E\|\mathbf v^t\|^2
\\
&\le
\Phi(\mathbf X^t)
-
\gamma\left(\frac34-L\gamma\kappa\right)\|\nabla\Phi(\mathbf X^t)\|^2
+
L\gamma^2\mathcal V_K
+
\gamma(1+L\gamma\kappa\chi_m)d
\\
&\le
\Phi(\mathbf X^t)
-
\frac\gamma2\|\nabla\Phi(\mathbf X^t)\|^2
+
L\gamma^2\mathcal V_K
+
\gamma\left(1+\frac{\chi_m}{4}\right)d.
\end{align*}
Taking total expectations, summing over $t$, and using
$\Phi\ge\Phi^\star$ yields
\begin{equation}\label{eq:proof_pp_base_bound}
\begin{aligned}
\frac1T\sum_{t=0}^{T-1}
\mathbb E\|\nabla\Phi(\mathbf X^t)\|^2
&\le
\frac{2\Delta_0}{\gamma T}
+
2L\gamma\mathcal V_K
+
\left(2+\frac{\chi_m}{2}\right)d
\\
&=
\frac{2\Delta_0}{\gamma T}
+
2L\gamma\mathcal V_K
+
\mathcal D_K\gamma^2,
\end{aligned}
\end{equation}
where
\[
\mathcal V_K
=
\kappa\left(
\frac{\tau_w^2}{q}+\frac{N-m}{m(N-1)}\zeta_w^2
\right)
+
\frac{\kappa^2\sigma_w^2}{qK},
\qquad
\mathcal D_K
=
\frac{4+\chi_m}{8}L^2\kappa^3M
\left(1-\frac1K\right)^2.
\]

\textbf{Step 5: Learning-rate choice.}
For $\gamma_T=K\eta_T$
in~\eqref{eq:explicit_learning_rate},
omitting candidates with zero denominators,
\[
\frac{2\Delta_0}{\gamma_TT}
=
\max\left\{
\frac{8L\kappa\Delta_0}{T},
2\sqrt{\frac{L\Delta_0\mathcal V_K}{T}},
\left(
\frac{4\Delta_0^2\mathcal D_K}{T^2}
\right)^{1/3}
\right\},
\]
and
\[
2L\gamma_T\mathcal V_K
\le
2\sqrt{\frac{L\Delta_0\mathcal V_K}{T}},
\qquad
\mathcal D_K\gamma_T^2
\le
\left(
\frac{4\Delta_0^2\mathcal D_K}{T^2}
\right)^{1/3}.
\]
Substituting into~\eqref{eq:proof_pp_base_bound}
and bounding the maximum by the sum
establishes~\eqref{eq:convergence_bound}, which completes the proof.
\end{proof}

\subsection{Proof of Corollary~\ref{cor:homogeneous}}
\label{app:proof_homogeneous}

\begin{proof}
When $r_i=R$ for every client,
$\Phi=F$ and $\tau_w^2=0$.
Substituting $\gamma=(4L\kappa\sqrt T)^{-1}$
into~\eqref{eq:proof_pp_base_bound} gives
\begin{align*}
\frac1T\sum_{t=0}^{T-1}
\mathbb E\|\nabla F(\mathbf X^t)\|^2
&\le
\frac{
8L\kappa\Delta_0
+\frac{N-m}{m(N-1)}\zeta_w^2/2
+\kappa\sigma_w^2/(2qK)
}{\sqrt T}
\\
&\quad+
\frac{(4+\chi_m)\kappa M}{128T}
\left(1-\frac1K\right)^2,
\end{align*}
which is~\eqref{eq:homogeneous_rate}.
\end{proof}

\subsection{Insights from Theorem \ref{thm:convergence}.}
\label{app:insights_convergence}
\paragraph{Asymptotic Convergence and Exact Aggregation.}Theorem~\ref{thm:convergence} establishes that, under an appropriately tuned learning rate, \texttt{CFLoRA} achieves a sublinear convergence rate of $\mathcal{O}(T^{-1/2})$ to a stationary point of the surrogate objective. Crucially, this bound isolates the effects of the learning rate and statistical noise without suffering from any non-vanishing error floor. By employing complementary-channel selection, \texttt{CFLoRA} achieves exact aggregation, thereby completely eliminating this bilinear aggregation error from the asymptotic rate. The parameter $\kappa = 1/\min\{p, 1-p\}$ explicitly captures the variance expansion cost of alternating between the factor matrices $\mathbf{A}$ and $\mathbf{B}$.

\paragraph{Components in Variance ($\mathcal{V}_K$).}The term $\mathcal{V}_K$ encapsulates three distinct sources of statistical variance in the federated system, each scaled by the protocol's alternating selection factor $\kappa$:

\begin{itemize}
\item \textbf{Activation-mask variability ($\tau_w^2$):} This variance arises from the heterogeneity of client rank budgets. Because clients randomly sample sub-matrices corresponding to their local capacity, their updates fluctuate around the true gradient of $\Phi$. If all clients possess the capacity to use the full server rank (i.e., $r_i = R$ and $\mathbf{H}_i^t = \mathbf{I}_R$), this term vanishes.
\item \textbf{Client-sampling variance ($\zeta_w^2$):} This term accounts for the data heterogeneity across different clients under partial participation ($m < N$). The coefficient $\frac{N-m}{m(N-1)}$ reflects standard sampling without replacement; thus, if the server aggregates from all clients in every round ($m = N$), this inter-client variance vanishes.
\item \textbf{Minibatch noise ($\sigma_w^2$):} The standard stochastic gradient noise from local data sampling is effectively suppressed by the number of local gradient steps executed across participating clients ($K$).
\end{itemize}
\paragraph{Tradeoffs of Local Training Steps ($K$).} Theorem \ref{thm:convergence} reveals a fundamental theoretical tradeoff governed by the number of local steps $K$, explicitly controlled by the balance between $\mathcal{V}_K$ and the local drift term $\mathcal{D}_K$. Executing more local steps reduces the impact of minibatch noise (via the $\frac{\kappa^2\sigma_w^2}{qK}$ term in $\mathcal{V}_K$). However, this variance reduction comes at the cost of increased local drift across clients ($\mathcal{D}_K$), which scales with $(1 - 1/K)^2$. Because this structural drift vanishes entirely when $K=1$, the choice of $K$ should balance statistical minibatch variance and client local drifts. Note that the activation-mask variability ($\tau_w^2$) remains unaffected by $K$, as the masks are drawn only once per communication round.

\paragraph{Heterogeneous Ranks and the Surrogate Objective ($\Phi$).} Since clients with low rank budgets evaluate gradients using masked sub-matrices, their updates are structurally biased with respect to the full-adapter objective $F$. Consequently, the theoretical guarantee concerns the surrogate objective $\Phi$, which incorporates these heterogeneous client ranks through the activation distributions $\{\mathcal H_i\}_{i=1}^{N}$. When every client uses the full server rank ($\mathbf H_i^t=\mathbf I_R$), $\Phi$ reduces to $F$.

%

% For equal aggregation weights and uniform variance bounds
% $\sigma_i^2\leq\sigma^2$ and $\tau_i^2\leq\tau^2$,
% \[
% V_K\leq
% \frac{\kappa\tau^2}{N}
% +\frac{\kappa^2\sigma^2}{NK}.
% \]
% However, these reductions in statistical terms do not imply
% unconditional linear speedup, since the drift term must
% also be controlled.

%\paragraph{Heterogeneous ranks and adaptive selection.}  Because clients with restricted rank budgets evaluate gradients using masked sub-matrices, their updates are structurally biased with respect to the full-adapter objective $F$. Consequently, our theoretical guarantee must concern a surrogate objective $\Phi$, which formally incorporates these heterogeneous client ranks through the activation distributions $\{\mathcal H_i\}_{i=1}^{N}$. When every client possesses the capacity to use the full server rank (i.e., $\mathbf H_i^t=\mathbf I_R$), activation-mask variability vanishes and $\Phi$ exactly reduces to $F$. However, for systems with smaller client ranks, optimizing the surrogate $\Phi$ represents the best achievable model adaptation, as stationarity of $\Phi$ does not generally imply stationarity of $F$.

%The theorem also accommodates history-dependent selection probabilities, including the contribution-aware mechanism, provided that $p$ is clipped to $[p_{\min},1-p_{\min}]$. The dependence on $\kappa$ makes explicit the cost of allowing either factor's selection probability to approach zero.

% \newpage
% \section{Algorithm of \texttt{CFLoRA}}
% \label{app:algorithm}

\section{Algorithm of \texttt{CFLoRA}}
\label{app:algorithm}

\setcounter{algocf}{0}
\renewcommand{\thealgocf}{B.\arabic{algocf}}

\begin{algorithm}[h]
    \caption{\texttt{CFLoRA} with Homogeneous Client Ranks}
    \label{alg:CFLoRA-homogeneous}
    \LinesNumbered
    \KwIn{$\mathbf W_0$, $(\mathbf B_{\mathrm{init}},
    \mathbf A_{\mathrm{init}})$, $R$,
    $\{w_i\}_{i=1}^{N}$, $m$,
    $T$, $K$, $\eta$, $p\in(0,1)$}
    \KwOut{$(\mathbf B^T,\mathbf A^T)$}

    $\mathbf B^0\leftarrow\mathbf B_{\mathrm{init}}$,
    $\mathbf A^0\leftarrow\mathbf A_{\mathrm{init}}$\;
    $q\leftarrow m/N$,
    $\eta_B\leftarrow\eta/p$,
    $\eta_A\leftarrow\eta/(1-p)$\;

    \For{$t\leftarrow 0$ \KwTo $T-1$}{
        Sample $m$ clients from $\{1,\ldots,N\}$\ to form $\mathcal S^t$\;
        Draw $z_j^t\sim\operatorname{Bernoulli}(p)$
        independently for $j=1,\ldots,R$\;
        $\mathbf Z^t\leftarrow
        \operatorname{diag}(z_1^t,\ldots,z_R^t)$\;
        Broadcast $(\mathbf B^t,\mathbf A^t,\mathbf Z^t)$
        to clients in $\mathcal S^t$\;

        \For{$i\in\mathcal S^t$ \textnormal{ in parallel}}{
            $\mathbf B_i^{t,0}\leftarrow\mathbf B^t$,
            $\mathbf A_i^{t,0}\leftarrow\mathbf A^t$\;

            \For{$k\leftarrow 0$ \KwTo $K-1$}{
                Compute gradients
                $(\mathbf g_{B,i}^{t,k},
                \mathbf g_{A,i}^{t,k})$ of
                $\ell\!\left(
                \mathbf W_0+
                \mathbf B_i^{t,k}\mathbf A_i^{t,k};
                \xi_i^{t,k}\right)$
                with respect to the local factors\;
                $\mathbf B_i^{t,k+1}
                \leftarrow\mathbf B_i^{t,k}
                -\eta_B\mathbf g_{B,i}^{t,k}
                \mathbf Z^t$\;
                $\mathbf A_i^{t,k+1}
                \leftarrow\mathbf A_i^{t,k}
                -\eta_A (\mathbf{I}_R - \mathbf Z^t)\mathbf g_{A,i}^{t,k}
                $\;
            }

            $\Delta\mathbf B_i^t
            \leftarrow\mathbf B_i^{t,K}
            -\mathbf B_i^{t,0}$\;
            $\Delta\mathbf A_i^t
            \leftarrow\mathbf A_i^{t,K}
            -\mathbf A_i^{t,0}$\;
            Upload the active entries of
            $\Delta\mathbf B_i^t$ and
            $\Delta\mathbf A_i^t$\;
        }

        \tcp{Factor-wise aggregation}
        $\mathbf B^{t+1}\leftarrow\mathbf B^t+
        \sum_{i\in\mathcal S^t}\frac{w_i}{q}
        \Delta\mathbf B_i^t$\;
        $\mathbf A^{t+1}\leftarrow\mathbf A^t+
        \sum_{i\in\mathcal S^t}\frac{w_i}{q}
        \Delta\mathbf A_i^t$\;
    }

    \Return $(\mathbf B^T,\mathbf A^T)$\;
\end{algorithm}

\begin{algorithm}[t]
    \caption{\texttt{CFLoRA} with Heterogeneous Client Ranks}
    \label{alg:CFLoRA}
    \LinesNumbered
    \KwIn{$\mathbf W_0$, $(\mathbf B_{\mathrm{init}},
    \mathbf A_{\mathrm{init}})$, $R$,
    $\{r_i\}_{i=1}^{N}$, $\{w_i\}_{i=1}^{N}$,
    $m$, $T$, $K$, $\eta$,
    $p\in(0,1)$}
    \KwOut{$(\mathbf B^T,\mathbf A^T)$}

    $\mathbf B^0\leftarrow\mathbf B_{\mathrm{init}}$,
    $\mathbf A^0\leftarrow\mathbf A_{\mathrm{init}}$\;
    $q\leftarrow m/N$,
    $\eta_B\leftarrow\eta/p$,
    $\eta_A\leftarrow\eta/(1-p)$\;

    \For{$t\leftarrow 0$ \KwTo $T-1$}{
        Sample $m$ clients from $\{1,\ldots,N\}$\ to form $\mathcal S^t$;
        
        \tcp{Shared complementary mask}
        Draw $z_j^t\sim\operatorname{Bernoulli}(p)$
        independently for $j=1,\ldots,R$\;
        $\mathbf Z^t\leftarrow
        \operatorname{diag}(z_1^t,\ldots,z_R^t)$\;

        \tcp{Client-specific rank selections}
        \For{$i\in\mathcal S^t$}{
            % Sample $\mathcal S_i^t$ uniformly from the
            % $r_i$-element subsets of $\{1,\ldots,R\}$,
            % independently across participating clients\;
            % Construct $\mathbf E_i^t$ from the standard
            % basis vectors indexed by $\mathcal S_i^t$\;
            
            Uniformly sample $r_i$ distinct indices from $\{1,\ldots,R\}$\;
            Construct $\mathbf{E}_i^t$ from the standard basis vectors corresponding to the sampled indices\;
        }

        \For{$i\in\mathcal S^t$ \textnormal{ in parallel}}{
            $\widetilde{\mathbf B}_i^{t,0}
            \leftarrow\mathbf B^t\mathbf E_i^t$,
            $\widetilde{\mathbf A}_i^{t,0}
            \leftarrow(\mathbf E_i^t)^\top\mathbf A^t$\;
            $\widetilde{\mathbf Z}_i^t
            \leftarrow(\mathbf E_i^t)^\top
            \mathbf Z^t\mathbf E_i^t$\;

            \For{$k\leftarrow 0$ \KwTo $K-1$}{
                Compute gradients
                $(\mathbf g_{B,i}^{t,k},
                \mathbf g_{A,i}^{t,k})$ of
                $\ell\!\left(
                \mathbf W_0+
                \widetilde{\mathbf B}_i^{t,k}
                \widetilde{\mathbf A}_i^{t,k};
                \xi_i^{t,k}\right)$
                with respect to the compact factors\;
                $\widetilde{\mathbf B}_i^{t,k+1}
                \leftarrow\widetilde{\mathbf B}_i^{t,k}
                -\eta_B\mathbf g_{B,i}^{t,k}
                \widetilde{\mathbf Z}_i^t$\;
                $\widetilde{\mathbf A}_i^{t,k+1}
                \leftarrow\widetilde{\mathbf A}_i^{t,k}
                -\eta_A(\mathbf I_{r_i}
                -\widetilde{\mathbf Z}_i^t)
                \mathbf g_{A,i}^{t,k}$\;
            }

            $\Delta\widetilde{\mathbf B}_i^t
            \leftarrow\widetilde{\mathbf B}_i^{t,K}
            -\widetilde{\mathbf B}_i^{t,0}$\;
            $\Delta\widetilde{\mathbf A}_i^t
            \leftarrow\widetilde{\mathbf A}_i^{t,K}
            -\widetilde{\mathbf A}_i^{t,0}$\;
            Upload the active entries of
            $\Delta\widetilde{\mathbf B}_i^t$ and
            $\Delta\widetilde{\mathbf A}_i^t$\;
        }

        \tcp{Factor-wise aggregation}
        $\mathbf B^{t+1}\leftarrow\mathbf B^t+
        \sum_{i\in\mathcal S^t}\frac{w_i}{q}
        \Delta\widetilde{\mathbf B}_i^t
        (\mathbf E_i^t)^\top$\;
        $\mathbf A^{t+1}\leftarrow\mathbf A^t+
        \sum_{i\in\mathcal S^t}\frac{w_i}{q}
        \mathbf E_i^t
        \Delta\widetilde{\mathbf A}_i^t$\;
    }

    \Return $(\mathbf B^T,\mathbf A^T)$\;
\end{algorithm}

\clearpage

\section{Implementation and Complexity Analysis}
\label{app:implementation_complexity}

\setcounter{table}{0}
\renewcommand{\thetable}{\thesection.\arabic{table}}

\setcounter{figure}{0}
\renewcommand{\thefigure}{\thesection.\arabic{figure}}

\paragraph{Efficient Implementation.}
\texttt{CFLoRA} introduces \textit{negligible} additional communication overhead
beyond exchanging compact LoRA factors and updates. The additional control
information only involves $R$-bit complementary mask and
$\mathcal O(r_i\log R)$ bits specifying client $i$'s selected channel
indices. Even more, this information can alternatively be reproduced from shared random seeds. The main
additional operation is channel indexing: each participating client receives
the $r_i$ channels specified by $\mathbf E_i^t$, and the server maps the
returned increments back to their corresponding global positions.
These operations are implemented directly through indexing, without
explicitly constructing the selection matrices. Local training uses two
complementary branches to compute only the selected factor gradients,
while the server aggregates the compact increments with weights $w_i/q$.
Thus, the additional computation is limited to channel indexing and masking, which are negligible.

\begin{table}[h]
\centering
\caption{Normalized cost comparison at the same local rank for square projections between standard federated LoRA and our \texttt{CFLoRA}. MAC counts assume that input gradients are required and only
selected factor gradients are computed. Communication assumes compact
transmission at equal precision. Control information is negligible, which is excluded.}
\label{tab:analytical_complexity}
\small
\begin{tabular}{l|c|c}
\toprule
Quantity & Standard federated LoRA & \texttt{CFLoRA}\\
\midrule
Active adapter parameters       & $1$ & $1$\\
Trainable adapter parameters    & $1$ & $1/2$\\
Forward MACs                    & $1$ & $1$\\
Input-gradient MACs             & $1$ & $1$\\
Factor-gradient MACs            & $1$ & $1/2$\\
Total adapter MACs per step     & $1$ & $5/6$\\
Download payload per round      & $1$ & $1$\\
Upload payload per round        & $1$ & $1/2$\\
Combined communication per round & $1$ & $3/4$\\
\bottomrule
\end{tabular}
\end{table}

\begin{figure}[h]
    \centering
    % First subfigure
    \begin{subfigure}[b]{0.48\textwidth}
        \centering
        \includegraphics[width=\textwidth]{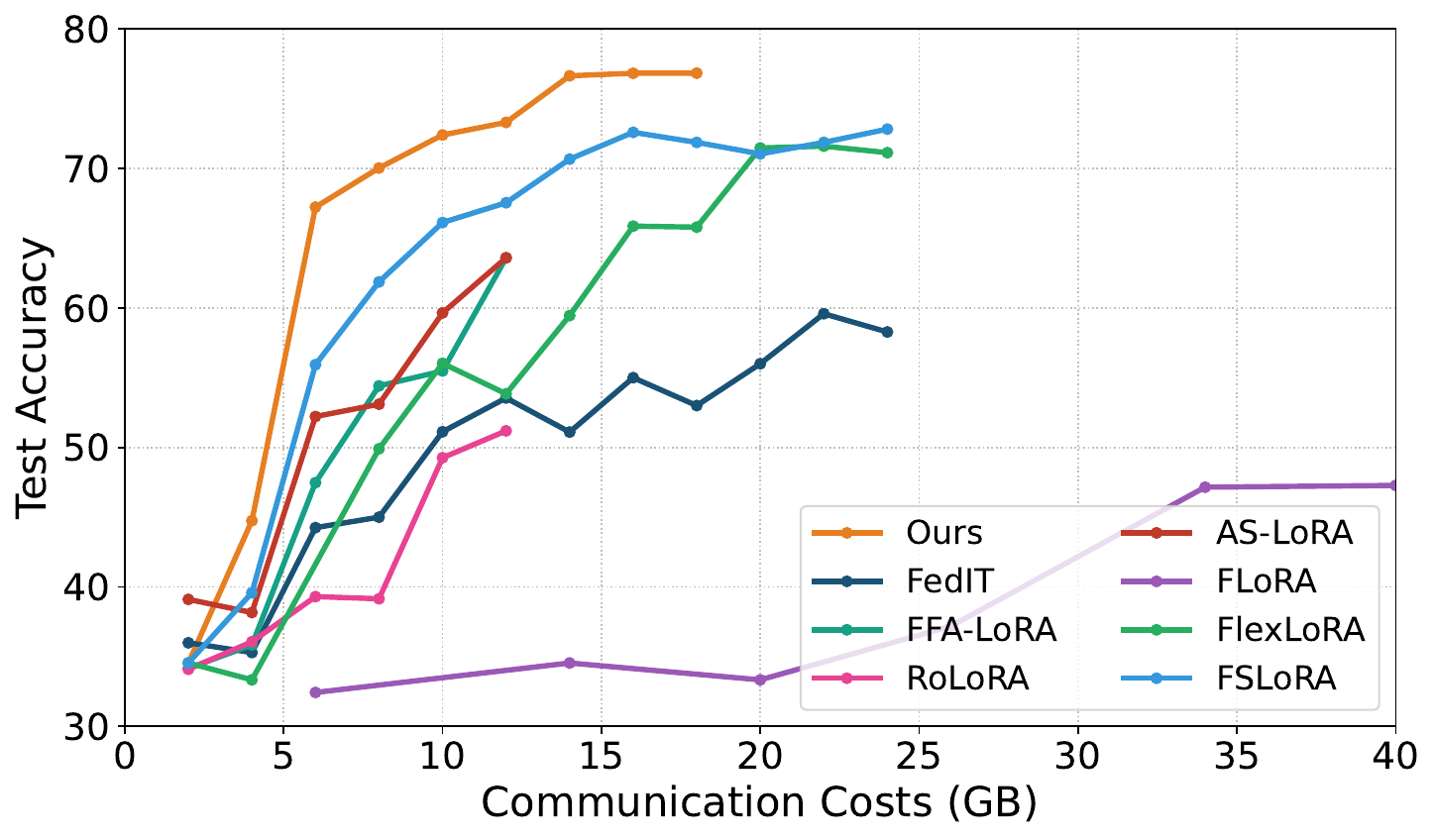}
        \caption{Test accuracy (matched) versus communication costs on the MNLI dataset.}
        \label{fig:CFLoRA_a}
    \end{subfigure}
    \hfill
    % Second subfigure
    \begin{subfigure}[b]{0.48\textwidth}
        \centering
        \includegraphics[width=\textwidth]{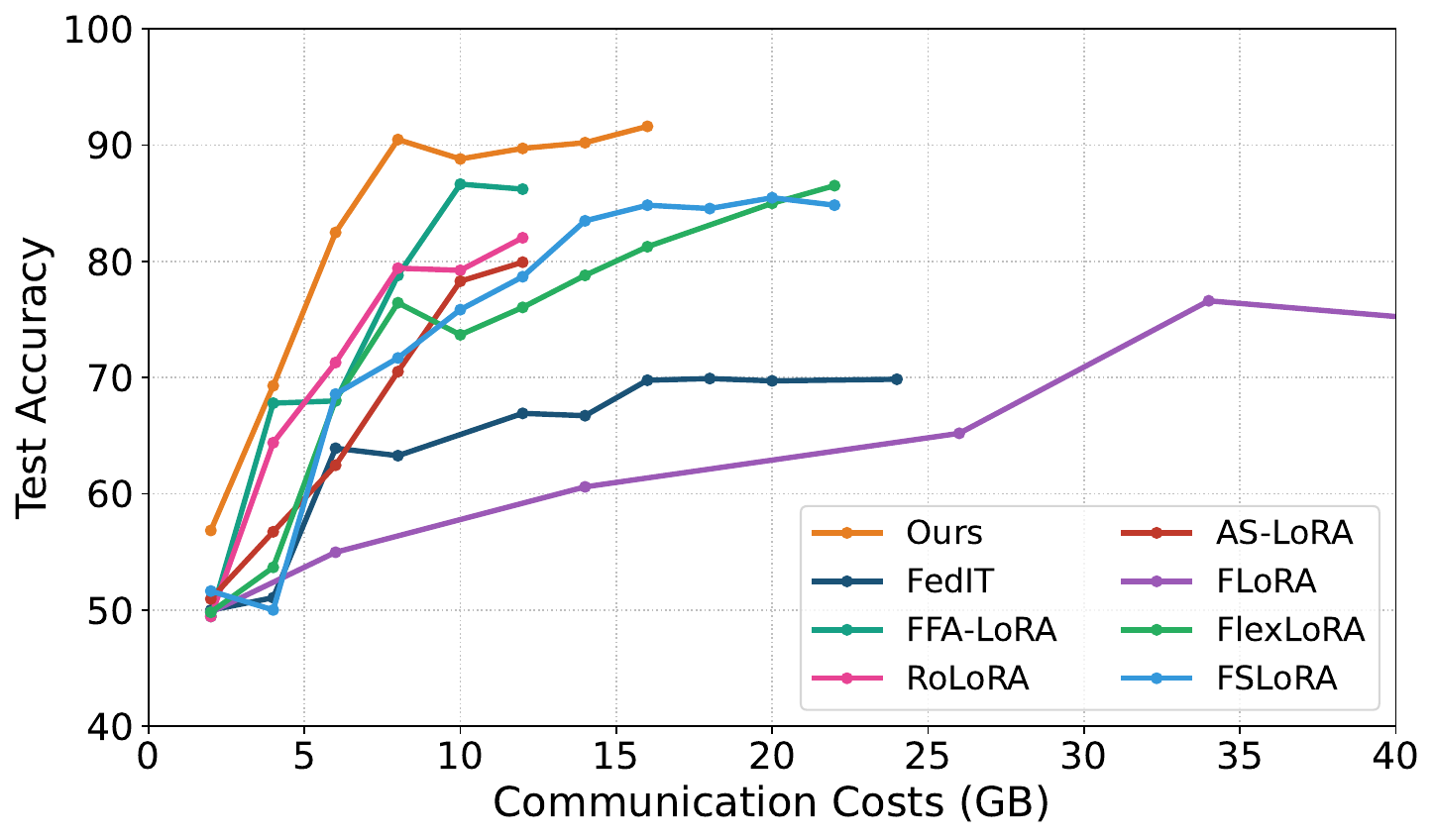} 
        \caption{Test accuracy versus communication costs on the QNLI dataset.}
        \label{fig:CFLoRA_b}
    \end{subfigure}
    
    \caption{Test accuracy versus communication costs of CFLoRA (Ours) and federated LoRA baselines on the MNLI and QNLI datasets.}
    \label{fig:Comm}
\end{figure}

\paragraph{Trainable Parameters.}
At the same local rank $r_i$, \texttt{CFLoRA} retains the same number
of adapter parameters as standard LoRA but updates only one factor
per rank channel: either a column of $\widetilde{\mathbf B}_i$
or the corresponding row of $\widetilde{\mathbf A}_i$.
For square projections (i.e., $d_{\mathrm{in}}=d_{\mathrm{out}}$),
these two vectors have equal size, so \texttt{CFLoRA} trains exactly
half of the adapter parameters, regardless of the realized
complementary mask.

The computational saving comes from omitting parameter-gradient
calculations for the frozen factor entries. Both compact factors
remain necessary for the forward pass.

\paragraph{Communication Costs.} Each participating
client downloads both compact adapter factors and uploads only
the updated columns of $\widetilde{\mathbf B}_i$ and rows of
$\widetilde{\mathbf A}_i$ in each round. At the same local rank,
the adapter download is unchanged compared with standard federated
LoRA, because both trainable and frozen entries are required
for the local forward pass. For square projections, only half
of the adapter entries are updated and uploaded. Therefore,
assuming the same transmission precision and excluding the
negligible control metadata, \texttt{CFLoRA} reduces the adapter
upload by $50\%$ and the combined adapter download and upload
by $25\%$. 

Table~\ref{tab:analytical_complexity} summarizes the comparison between standard federated LoRA and \texttt{CFLoRA}. Moreover, Figure~\ref{fig:Comm} illustrates accuracy versus communication costs under heterogeneous client ranks.

% ----------------------- Experiments -----------------------
% \newpage
\section{Experiments}
\label{app:experiments}
% Our evaluation will address four questions: (i) how CFLoRA compares with
% existing federated LoRA methods under homogeneous and heterogeneous client
% budgets; (ii) whether its structural savings translate into measured
% resource efficiency; (iii) how much adaptive allocation contributes beyond
% complementarity; and (iv) how the method responds to data heterogeneity,
% local computation, and client participation.

\subsection{Implementation Details}
\label{app:exp_details}
In the homogeneous setting, all clients share a uniform rank of $R=8$. In the heterogeneous setting, the server maintains rank $R=32$, and each client is assigned a fixed rank $r_i \in \{4, 8, 16, 32\}$ throughout training. To model heterogeneous client capabilities, we draw rank values from a normal distribution with mean $\mu=(a+b)/2$ and standard deviation $\sigma=(b-a)/6$, where $a=4$ and $b=32$. The sampled values are then clipped to $[4, 32]$ and mapped to the nearest allowed rank.
The LoRA factors $\mathbf{A}$ and $\mathbf{B}$ are initialized randomly and with zeros, respectively. For the shared global adapter, the LoRA scaling multiplier is fixed at $2$. In each round, all clients use the same complementary channel selection matrix $\mathbf{Z}$.
For the commonsense reasoning benchmark, the training corpus is a mixture of the ARC-c/e \citep{clark2018think}, BoolQ \citep{clark2019boolq}, HellaSwag \citep{zellers2019hellaswag}, OBQA \citep{mihaylov2018can}, PIQA \citep{bisk2020piqa}, and SIQA \citep{sap2019social} datasets. 
% With AdamW, the implementation applies compensation to the selected optimizer increments rather than merely rescaling gradients. 
Unless otherwise stated, the default hyperparameters used in our experiments are summarized in Table~\ref{tab:settings}.

\begin{table}[h]
\centering
\caption{parameter settings.}
\label{tab:settings}
\small
\begin{tabular}{l|c|c}
\toprule
Setting & RoBERTa / GLUE & LLaMA-3.2-3B-Instruct / commonsense reasoning \\
\midrule
Total clients & $25$  & $25$ \\
\midrule
Client participation ratio $q$ & $0.4$  & $0.4$ \\
\midrule
Communication rounds $T$ & $200$ & $100$\\
\midrule
Local steps $K$ & $50$ & $40$\\
\midrule
Batch size & $8$ & $8$\\
\midrule
Learning rate & $1\times10^{-4}$ & $2\times10^{-4}$\\
\midrule
LoRA dropout rate & $0.1$ & $0.1$\\
\midrule
Channel sampling probability $p$ & $0.9$ & $0.9$\\
% Independent seeds & $0,1,2,3$ & $0,1,2,3$\\
% Target module & ["query" "value"] & [“q proj”, “k proj”, “v proj”, “up proj”, “down proj”]\\
\bottomrule
\end{tabular}
\end{table}

% \newpage

\subsection{Experiments of LLaMA-3.2-3B-Instruct}
\label{app:llama_reasoning_hete}
In this section, we present an extended evaluation of fine-tuning LLaMA-3.2-3B-Instruct on the commonsense reasoning benchmark under both homogeneous and heterogeneous client ranks. The complete empirical results are summarized in Table~\ref{table:main_results_llama_homo_10of25} and Table~\ref{table:main_results_llma_reasoning_hete_10of25}.

\paragraph{Discussions on Homogeneous Client Ranks.}
As shown in Table~\ref{table:main_results_llama_homo_10of25}, \texttt{CFLoRA} achieves an overall average accuracy of $74.28\%$, outperforming all baseline methods including the strongest baseline FLoRA ($72.83\%$) and RoLoRA ($72.27\%$). Crucially, comparison with single-factor adaptation schemes reveals a fundamental empirical insight for large language models: methods that freeze one matrix factor (such as FFA-LoRA at $69.62\%$ and AS-LoRA at $69.71\%$) suffer a severe performance drop of over $4.5$ percentage points compared to \texttt{CFLoRA}. 

\paragraph{Discussions on Heterogeneous Client Ranks.}
Table~\ref{table:main_results_llma_reasoning_hete_10of25} demonstrates that the advantages of \texttt{CFLoRA} are pronounced in the presence of client rank heterogeneity ($r_i \in \{4, 8, 16, 32\}$). \texttt{CFLoRA} delivers an average accuracy of $75.36\%$, outperforming FSLoRA by $2.16$ points, FLoRA by $2.76$ points, FlexLoRA by $3.77$ points, and standard FedIT by $5.45$ points.

% ---------------------- Table Llama Homo 10-of-25 -----------------------
\begin{table*}[h]
\centering
% \resizebox{\textwidth}{!}{
% \setlength{\tabcolsep}{3mm}  % 保持原压缩
\caption{Testing accuracy for fine-tuning the LLaMA-3.2-3B-Instruct on the commonsense reasoning tasks under \textit{homogeneous} client ranks.}
\label{table:main_results_llama_homo_10of25} 
\resizebox{\textwidth}{!}{
\renewcommand{\arraystretch}{1}
\begin{tabular}{l|cccccc|c}
\toprule
\textbf{Method} & \textbf{ARC-c} & \textbf{ARC-e} & \textbf{BoolQ} & \textbf{OBQA} & \textbf{PIQA} & \textbf{SIQA} & \textbf{Avg.} \\
\midrule
FedIT & 67.72 {\scriptsize \color{gray} $\pm$ 0.45} & 84.48 {\scriptsize \color{gray} $\pm$ 1.10} & 62.44 {\scriptsize \color{gray} $\pm$ 0.27} & 74.53 {\scriptsize \color{gray} $\pm$ 0.47} & 70.51 {\scriptsize \color{gray} $\pm$ 2.39} & 66.47 {\scriptsize \color{gray} $\pm$ 1.89} & 71.03\\
FFA-LoRA & 67.45 {\scriptsize \color{gray} $\pm$ 0.10} & 80.62 {\scriptsize \color{gray} $\pm$ 0.29} & 64.17 {\scriptsize \color{gray} $\pm$ 0.04} & 63.37 {\scriptsize \color{gray} $\pm$ 1.93} & 70.39 {\scriptsize \color{gray} $\pm$ 3.89} & 71.69 {\scriptsize \color{gray} $\pm$ 0.54} & 69.62\\
RoLoRA & 69.60 {\scriptsize \color{gray} $\pm$ 1.93} & \textbf{86.78} {\scriptsize \color{gray} $\pm$ 0.12} & 64.18 {\scriptsize \color{gray} $\pm$ 0.07} & 73.73 {\scriptsize \color{gray} $\pm$ 2.45} & 69.61 {\scriptsize \color{gray} $\pm$ 3.88} & 69.73 {\scriptsize \color{gray} $\pm$ 2.27} & 72.27\\
AS-LoRA & 67.53 {\scriptsize \color{gray} $\pm$ 0.06} & 80.75 {\scriptsize \color{gray} $\pm$ 0.26} & 64.12 {\scriptsize \color{gray} $\pm$ 0.01} & 63.60 {\scriptsize \color{gray} $\pm$ 1.70} & 70.30 {\scriptsize \color{gray} $\pm$ 3.03} & 71.95 {\scriptsize \color{gray} $\pm$ 0.69} & 69.71\\
FLoRA & 70.65 {\scriptsize \color{gray} $\pm$ 0.16} & 83.67 {\scriptsize \color{gray} $\pm$ 0.03} & \textbf{65.26} {\scriptsize \color{gray} $\pm$ 1.10} & 72.07 {\scriptsize \color{gray} $\pm$ 0.75} & 72.67 {\scriptsize \color{gray} $\pm$ 1.34} & 72.67 {\scriptsize \color{gray} $\pm$ 1.41} & 72.83\\
\midrule
\textbf{CFLoRA} & \textbf{70.67} {\scriptsize \color{gray} $\pm$ 1.39} & 85.55 {\scriptsize \color{gray} $\pm$ 1.05} & 64.24 {\scriptsize \color{gray} $\pm$ 0.11} & \textbf{75.60} {\scriptsize \color{gray} $\pm$ 0.90} & \textbf{75.73} {\scriptsize \color{gray} $\pm$ 1.27} & \textbf{73.88} {\scriptsize \color{gray} $\pm$ 0.58} & \textbf{74.28} \\
\bottomrule
\end{tabular}
}
\end{table*}

\begin{table*}[h]
\centering
% \resizebox{\textwidth}{!}{
% \setlength{\tabcolsep}{3mm}  % 保持原压缩
\caption{Testing accuracy for fine-tuning the LLaMA-3.2-3B-Instruct on the commonsense reasoning benchmark under \textit{heterogeneous} client ranks.}
\label{table:main_results_llma_reasoning_hete_10of25}
\resizebox{\textwidth}{!}{
\renewcommand{\arraystretch}{1}
\begin{tabular}{l|cccccc|c}
\toprule
\textbf{Method} & \textbf{ARC-c} & \textbf{ARC-e} & \textbf{BoolQ} & \textbf{OBQA} & \textbf{PIQA} & \textbf{SIQA} & \textbf{Avg.} \\
\midrule
FedIT    & 67.04 {\scriptsize \color{gray} $\pm$ 0.32} & 80.13 {\scriptsize \color{gray} $\pm$ 0.97} & 62.37 {\scriptsize \color{gray} $\pm$ 0.81} & 71.33 {\scriptsize \color{gray} $\pm$ 1.40} & 70.24 {\scriptsize \color{gray} $\pm$ 2.13} & 68.32 {\scriptsize \color{gray} $\pm$ 1.37} & 69.91
\\
FLoRA & 66.98 {\scriptsize \color{gray} $\pm$ 1.20} & 84.18 {\scriptsize \color{gray} $\pm$ 0.42} & 63.55 {\scriptsize \color{gray} $\pm$ 1.94} & \textbf{76.00} {\scriptsize \color{gray} $\pm$ 0.57} & 74.27 {\scriptsize \color{gray} $\pm$ 1.63} & 70.63 {\scriptsize \color{gray} $\pm$ 1.88}  & 72.60
\\
FlexLoRA & 67.45 {\scriptsize \color{gray} $\pm$ 0.74} & 82.84 {\scriptsize \color{gray} $\pm$ 1.03} & 62.35 {\scriptsize \color{gray} $\pm$ 0.97} & 74.02 {\scriptsize \color{gray} $\pm$ 0.55} & 73.31 {\scriptsize \color{gray} $\pm$ 0.86} & 69.59 {\scriptsize \color{gray} $\pm$ 1.64} & 71.59
\\
FSLoRA  &  \textbf{69.88} {\scriptsize \color{gray} $\pm$ 0.27} & 84.51 {\scriptsize \color{gray} $\pm$ 0.49} & 64.21 {\scriptsize \color{gray} $\pm$ 0.75} & 73.80 {\scriptsize \color{gray} $\pm$ 0.92} & 74.96 {\scriptsize \color{gray} $\pm$ 1.50} & 71.83 {\scriptsize \color{gray} $\pm$ 1.69} & 73.20
\\
\midrule
\textbf{CFLoRA} & 69.51 {\scriptsize \color{gray} $\pm$ 0.12} & \textbf{85.82} {\scriptsize \color{gray} $\pm$ 0.77} & \textbf{66.51} {\scriptsize \color{gray} $\pm$ 0.35} & 
75.40 {\scriptsize \color{gray} $\pm$ 0.39}
&
\textbf{78.73} {\scriptsize \color{gray} $\pm$ 1.45} & \textbf{76.20} {\scriptsize \color{gray} $\pm$ 1.52}   & \textbf{75.36}
\\
\bottomrule
\end{tabular}
}
\end{table*}

% \newpage
% -------------------------- Ablation Study ----------------------------
\subsection{Further Experiments on Ablation Study on $p$} \label{app:ablation_study}
In this section, we provide additional ablation studies to further analyze the impact of the complementary channel selection probability $p$. In addition to Tables~\ref{table:ablation_robert_homo_10of25}, we evaluate the performance of CFLoRA across varying values of $p$ for RoBERTa on the GLUE benchmark under heterogeneous client ranks (Table \ref{table:ablation_robert_hete_10of25}), as well as for LLaMA-3.2-3B-Instruct on the commonsense reasoning benchmark under both homogeneous and heterogeneous client ranks (Table \ref{table:ablation_llama_homo_10of25} and \ref{table:ablation_llama_hete_10of25}). Both sets of experiments are conducted under a partial-participation setting with $q = 0.4$ to simulate practical resource-constrained federated learning environments.

% We investigate the impact of the complementary channel selection probability $p$ by varying it from $0.3$ to $1.0$. Note that setting $p=1.0$ restricts updates entirely to the $\mathbf B$ factor, reducing our method to FFA-LoRA. Table \ref{table:ablation_robert_homo_10of25} 
% summarizes the results across GLUE and commonsense reasoning benchmarks under both homogeneous and heterogeneous client ranks.
% Furthermore, the performance degradation at $p=1.0$ corroborates our core motivation: simultaneously co-adapting selected components of both $\mathbf A$ and $\mathbf B$ is critical for federated LoRA finetuning.

% ----------------- Table. Ablation study: Llama Homo 10-of-25 ------------------
\begin{table*}[h]
\centering
% \resizebox{\textwidth}{!}{
% \setlength{\tabcolsep}{3mm}  % 保持原压缩
\caption{Ablation study on selection probability $p$ for LLaMA-3.2-3B-Instruct on the commonsense reasoning benchmark under homogeneous client ranks.}
\label{table:ablation_llama_homo_10of25} 
\resizebox{\textwidth}{!}{
\renewcommand{\arraystretch}{1}
\begin{tabular}{l|cccccc|c}
\toprule
\textbf{Method} & \textbf{ARC-c} & \textbf{ARC-e} & \textbf{BoolQ} & \textbf{OBQA} & \textbf{PIQA} & \textbf{SIQA} & \textbf{Avg.} \\
\midrule
$p=0.3$ 
& 65.98 {\scriptsize \color{gray} $\pm$ 1.67} & 84.71 {\scriptsize \color{gray} $\pm$ 0.44} & 62.54 {\scriptsize \color{gray} $\pm$ 0.50} & 70.40 {\scriptsize\color{gray}$\pm$ 0.57} & 72.33 {\scriptsize\color{gray}$\pm$ 1.79} & 72.25 {\scriptsize\color{gray}$\pm$ 2.07} & 71.37 \\
$p=0.5$ 
& 69.51 {\scriptsize \color{gray} $\pm$ 0.96} & 84.87 {\scriptsize \color{gray} $\pm$ 0.15} & 63.62 {\scriptsize \color{gray} $\pm$ 0.67} & 74.63 {\scriptsize\color{gray}$\pm$ 0.65} & 74.53 {\scriptsize\color{gray}$\pm$ 1.48} & 73.26 {\scriptsize\color{gray}$\pm$ 0.62} & 73.40 \\
$p=0.7$
& 68.65 {\scriptsize \color{gray} $\pm$ 0.14} & \textbf{85.64} {\scriptsize \color{gray} $\pm$ 1.24} & 64.17 {\scriptsize \color{gray} $\pm$ 0.13} & 75.09 {\scriptsize\color{gray}$\pm$ 0.41} & \textbf{75.74} {\scriptsize\color{gray}$\pm$ 2.00} & 73.63 {\scriptsize\color{gray}$\pm$ 1.31} & 73.82 \\
$p=0.9$ 
& \textbf{70.67} {\scriptsize \color{gray} $\pm$ 1.39} & 85.55 {\scriptsize \color{gray} $\pm$ 1.05} & 64.24 {\scriptsize \color{gray} $\pm$ 0.11} & \textbf{75.60} {\scriptsize \color{gray} $\pm$ 0.90} & 75.73 {\scriptsize \color{gray} $\pm$ 1.27} & \textbf{73.88} {\scriptsize \color{gray} $\pm$ 0.58} & \textbf{74.28} \\
$p=1.0$ 
& 67.56 {\scriptsize \color{gray} $\pm$ 0.13} & 80.54 {\scriptsize \color{gray} $\pm$ 0.24} & \textbf{64.31} {\scriptsize \color{gray} $\pm$ 0.08} & 64.25 {\scriptsize \color{gray} $\pm$ 1.39} & 70.15 {\scriptsize \color{gray} $\pm$ 2.44} & 71.63 {\scriptsize \color{gray} $\pm$ 0.73} & 69.74 \\
\bottomrule
\end{tabular}
}
\end{table*}

% ----------------- Table. Ablation study: Llama Hete 10-of-25 ------------------
\begin{table*}[h]
\centering
% \resizebox{\textwidth}{!}{
% \setlength{\tabcolsep}{3mm}  % 保持原压缩
\caption{Ablation study on selection probability $p$ for LLaMA-3.2-3B-Instruct on the commonsense reasoning benchmark under heterogeneous client ranks.}
\label{table:ablation_llama_hete_10of25} 
\resizebox{\textwidth}{!}{
\renewcommand{\arraystretch}{1}
\begin{tabular}{l|cccccc|c}
\toprule
\textbf{Method} & \textbf{ARC-c} & \textbf{ARC-e} & \textbf{BoolQ} & \textbf{OBQA} & \textbf{PIQA} & \textbf{SIQA} & \textbf{Avg.} \\
\midrule
$p=0.3$ 
& 68.81 {\scriptsize \color{gray} $\pm$ 0.16} & 82.78 {\scriptsize \color{gray} $\pm$ 0.76} & 62.25 {\scriptsize \color{gray} $\pm$ 0.11} & 74.97 {\scriptsize\color{gray}$\pm$ 0.61} & 71.47 {\scriptsize\color{gray}$\pm$ 3.24} & 73.44 {\scriptsize\color{gray}$\pm$ 0.20} & 72.29 \\
$p=0.5$ 
& 68.89 {\scriptsize \color{gray} $\pm$ 0.45} & 83.76 {\scriptsize \color{gray} $\pm$ 0.71} & 65.25 {\scriptsize \color{gray} $\pm$ 0.44} & 75.50 {\scriptsize \color{gray} $\pm$ 1.13} & 76.93 {\scriptsize \color{gray} $\pm$ 0.46} & 75.61 {\scriptsize \color{gray} $\pm$ 0.07} & 74.32 
\\
$p=0.7$
& 69.10 {\scriptsize \color{gray} $\pm$ 0.60} & 84.53 {\scriptsize \color{gray} $\pm$ 0.66} & 66.18 {\scriptsize \color{gray} $\pm$ 0.50} & \textbf{75.82} {\scriptsize\color{gray}$\pm$ 0.57} & \textbf{78.88} {\scriptsize\color{gray}$\pm$ 4.90} & 76.01 {\scriptsize\color{gray}$\pm$ 0.17} & 75.09
\\
$p=0.9$ 
& \textbf{69.51} {\scriptsize \color{gray} $\pm$ 0.12} & \textbf{85.82} {\scriptsize \color{gray} $\pm$ 0.77} & \textbf{66.51} {\scriptsize \color{gray} $\pm$ 0.35} & 75.40 {\scriptsize \color{gray} $\pm$ 0.39} & 78.73 {\scriptsize \color{gray} $\pm$ 1.45} & \textbf{76.20} {\scriptsize \color{gray} $\pm$ 1.52} & \textbf{75.36}
\\
$p=1.0$ 
& 69.13 {\scriptsize \color{gray} $\pm$ 0.38} & 83.65 {\scriptsize \color{gray} $\pm$ 0.73} & 62.34 {\scriptsize \color{gray} $\pm$ 0.23} & 67.73 {\scriptsize \color{gray} $\pm$ 0.94} & 73.33 {\scriptsize \color{gray} $\pm$ 0.63} & 74.12 {\scriptsize \color{gray} $\pm$ 0.34} & 71.72 \\
\bottomrule
\end{tabular}
}
\end{table*}

\paragraph{Asymmetric Allocation.}
Theoretically, the convergence upper bounds established in Theorem~\ref{thm:convergence} and Corollary~\ref{cor:homogeneous} depend inversely on $p_{\min} = \min\{p, 1-p\}$ via the parameter $\kappa = p_{\min}^{-1}$. Under worst-case analysis where both factors are treated symmetrically, the theoretical bound is minimized at $p = 0.5$. However, our empirical findings reveal a distinct asymmetric behavior during actual federated optimization:
\begin{itemize}
    \item \textbf{Asymmetric Importance of Factor $\mathbf{B}$:} Increasing $p$ from $0.3$ to $0.9$ consistently improves downstream performance across all benchmark tasks. For instance, on RoBERTa under homogeneous ranks (Table~\ref{table:ablation_robert_homo_10of25}), the average score rises from $74.72$ at $p=0.3$ to $78.70$ at $p=0.5$, peaking at $79.73$ when $p=0.9$. A similar steady gain is observed in the heterogeneous setting (Table~\ref{table:ablation_robert_hete_10of25}), rising from $73.37$ ($p=0.3$) to $79.57$ ($p=0.9$). This behavior aligns closely with findings in centralized PEFT literature \citep{hayou2024lora+}, which demonstrate that the two low-rank factor matrices contribute asymmetrically to model representation updates: $\mathbf{B}$ typically requires larger effective updates or faster gradient movement to steer model representations, whereas $\mathbf{A}$ acts primarily as a random subspace projection. Prioritizing updates to $\mathbf{B}$ ($p=0.9$) thus accelerates optimization while preserving sufficient exploration along the latent channels.
    \item \textbf{Penalty for Starving $\mathbf{B}$ ($p=0.3$):} When $p$ is decreased to $0.3$ (meaning the majority of active channels update $\mathbf{A}$ instead of $\mathbf{B}$), we observe a pronounced drop of $5.01$ points on RoBERTa homogeneous and $6.20$ points on heterogeneous ranks. This confirms that allocating insufficient updates to $\mathbf{B}$ acts as a bottleneck for parameter adaptation.
\end{itemize}

\paragraph{Hyperparameter Robustness Across $p \in [0.5, 0.9]$.}
Despite the empirical preference for higher $p$, \texttt{CFLoRA} exhibits robustness across the wide range. On RoBERTa (both homogeneous and heterogeneous), variations across $p \in \{0.5, 0.7, 0.9\}$ remain within a tight margin of approximately $1.0$ point on average (e.g., $78.70 \to 78.84 \to 79.73$ in Table~\ref{table:ablation_robert_homo_10of25}, and $78.74 \to 79.45 \to 79.57$ in Table~\ref{table:ablation_robert_hete_10of25}). A similar plateau is evident on LLaMA-3.2-3B-Instruct in Table~\ref{table:ablation_llama_homo_10of25}, where tasks such as OBQA, PIQA, and SIQA perform consistently well from $p=0.5$ onward. This hyperparameter insensitivity is of substantial practical value in federated learning: practitioners do not need exhaustive tuning to identify a value of $p$.

% and reduces \texttt{CFLoRA} to the fixed-factor baseline FFA-LoRA
\paragraph{Catastrophic Degradation at Single-Factor Boundary ($p=1.0$).}
Crucially, when $p$ reaches $1.0$, which entirely eliminates updates to $\mathbf{A}$, performance experiences a sharp drop across different settings. On RoBERTa under homogeneous ranks, the overall average drops sharply by $4.39$ points (from $79.73$ at $p=0.9$ down to $75.34$ at $p=1.0$), with severe regressions across individual tasks such as CoLA ($-5.06$) and MNLI-M ($-6.80$). The degradation is even more pronounced when fine-tuning LLaMA-3.2-3B-Instruct on reasoning tasks (Table~\ref{table:ablation_llama_homo_10of25}). Comparing $p=0.9$ to $p=1.0$, accuracy drops $4.54\%$ on average. These empirical observations directly corroborate our core theoretical insights: freezing one low-rank projection matrix permanently severely curtails the adapter's expressive capacity.

% -------------------- Varying Client Participation --------------------
% \newpage
\subsection{Experiments with Varying Client Participation}
\label{app:full_participation}

We evaluate the effect of client participation by varying the participation rate $q$ from $0.2$ to $1.0$. Tables~\ref{table:main_results_robert_homo_varying_client} and \ref{table:main_results_robert_hete_varying_client} report test accuracy on the QNLI dataset under homogeneous and heterogeneous client ranks, respectively. Under homogeneous ranks, \texttt{CFLoRA} achieves the highest average accuracy, exceeding the strongest baseline by $1.38$ points. Under heterogeneous ranks, it improves the average over the strongest baseline by $1.52$ points. 
We further evaluate full participation across six GLUE tasks under homogeneous ranks in Table~\ref{table:main_results_robert_homo_full}, \texttt{CFLoRA} obtains the highest average score of $84.82$, surpassing the strongest baseline (FLoRA) by $2.26$ points. These results indicate that its performance advantage extends beyond the partial-participation setting used in the main experiments.

% ---------------------- Table GLUE Homo Varying Participation -----------------------
\begin{table*}[h]
\centering
% \resizebox{\textwidth}{!}{
% \setlength{\tabcolsep}{3mm}  % 保持原压缩
\caption{Testing accuracy for fine-tuning the RoBERTa model on the QNLI dataset under homogeneous client ranks with varying client participation.}
\label{table:main_results_robert_homo_varying_client}
\resizebox{\textwidth}{!}{
\renewcommand{\arraystretch}{1}
\begin{tabular}{l|ccccc|c}
\toprule
\textbf{Method} & \textbf{$q = 0.2$} & \textbf{$q = 0.4$} & \textbf{$q = 0.6$} & \textbf{$q = 0.8$} & \textbf{$q = 1.0$} &  \textbf{Avg.} \\
\midrule
FedIT    & 85.01 {\scriptsize \color{gray} $\pm$ 0.23} & 86.61 {\scriptsize \color{gray} $\pm$ 0.26} & 88.90 {\scriptsize \color{gray} $\pm$ 0.26} &  93.06 {\scriptsize \color{gray} $\pm$ 0.90} & 90.02 {\scriptsize \color{gray} $\pm$ 0.46} & 88.72 \\
FFA-LoRA & 86.13 {\scriptsize \color{gray} $\pm$ 0.35} & 88.49 {\scriptsize \color{gray} $\pm$ 0.23} & 90.94 {\scriptsize \color{gray} $\pm$ 0.33} & \textbf{94.09} {\scriptsize \color{gray} $\pm$ 0.32} & 91.42 {\scriptsize \color{gray} $\pm$ 0.19} & 90.21 \\
RoLoRA   & 86.52 {\scriptsize \color{gray} $\pm$ 0.27} & 87.60 {\scriptsize \color{gray} $\pm$ 2.21} & 92.60 {\scriptsize \color{gray} $\pm$ 0.46} & 93.23 {\scriptsize \color{gray} $\pm$ 0.14} & 93.10 {\scriptsize \color{gray} $\pm$ 0.46} & 90.61 \\
AS-LoRA  & 87.13 {\scriptsize \color{gray} $\pm$ 0.35} & 88.26 {\scriptsize \color{gray} $\pm$ 0.67} & 91.28 {\scriptsize \color{gray} $\pm$ 0.38} & 92.71 {\scriptsize \color{gray} $\pm$ 0.17} & 92.46 {\scriptsize \color{gray} $\pm$ 1.03} & 90.37 \\
FLoRA    & 88.83 {\scriptsize \color{gray} $\pm$ 0.24} & 89.27 {\scriptsize \color{gray} $\pm$ 0.20} & 90.69 {\scriptsize \color{gray} $\pm$ 0.41} & 92.93 {\scriptsize \color{gray} $\pm$ 0.44} & 91.41 {\scriptsize \color{gray} $\pm$ 0.35} & 90.63 \\
\midrule
\textbf{CFLoRA} & \textbf{88.95} {\scriptsize \color{gray} $\pm$ 0.06} & \textbf{90.61} {\scriptsize \color{gray} $\pm$ 0.25} & \textbf{92.98 }{\scriptsize \color{gray} $\pm$ 0.31} &  93.59 {\scriptsize \color{gray} $\pm$ 0.06} & \textbf{93.93} {\scriptsize \color{gray} $\pm$ 0.34} & \textbf{92.01} \\
\bottomrule
\end{tabular}
}
\end{table*}

% ---------------------- Table GLUE Hete Varying Participation -----------------------
\begin{table*}[h]
\centering
% \resizebox{\textwidth}{!}{
% \setlength{\tabcolsep}{3mm}  % 保持原压缩
\caption{Testing accuracy for fine-tuning the RoBERTa model on the QNLI dataset under heterogeneous client ranks with varying client participation.}
\label{table:main_results_robert_hete_varying_client}
\resizebox{\textwidth}{!}{
\renewcommand{\arraystretch}{1}
\begin{tabular}{l|ccccc|c}
\toprule
\textbf{Method} & \textbf{$q = 0.2$} & \textbf{$q = 0.4$} & \textbf{$q = 0.6$} & \textbf{$q = 0.8$} & \textbf{$q = 1.0$} &  \textbf{Avg.} \\
\midrule
FedIT & 68.01 {\scriptsize \color{gray} $\pm$ 4.58} & 70.34 {\scriptsize \color{gray} $\pm$ 2.01} & 85.89 {\scriptsize \color{gray} $\pm$ 0.37} & 90.03 {\scriptsize \color{gray} $\pm$ 3.26} & 88.22 {\scriptsize \color{gray} $\pm$ 2.36} & 80.50 \\
FLoRA   & 70.97 {\scriptsize \color{gray} $\pm$ 4.32} & 87.35 {\scriptsize \color{gray} $\pm$ 1.84} & 87.75 {\scriptsize \color{gray} $\pm$ 3.21} & 92.67 {\scriptsize \color{gray} $\pm$ 0.72} & 91.77 {\scriptsize \color{gray} $\pm$ 2.35} & 86.10 \\
FlexLoRA & \textbf{78.41} {\scriptsize \color{gray} $\pm$ 5.10} & 86.66 {\scriptsize \color{gray} $\pm$ 1.55} & 86.81 {\scriptsize \color{gray} $\pm$ 1.36} & 93.11 {\scriptsize \color{gray} $\pm$ 1.25} & \textbf{94.02} {\scriptsize \color{gray} $\pm$ 0.95} & 87.80 \\
FSLoRA   & 73.68 {\scriptsize \color{gray} $\pm$ 4.01} & 86.68 {\scriptsize \color{gray} $\pm$ 2.27} & 86.92 {\scriptsize \color{gray} $\pm$ 1.48} & 89.40 {\scriptsize \color{gray} $\pm$ 2.45} & 88.54 {\scriptsize \color{gray} $\pm$ 1.60} & 85.04 \\
\midrule
\textbf{CFLoRA} & 76.84 {\scriptsize \color{gray} $\pm$ 3.73} & \textbf{91.29} {\scriptsize \color{gray} $\pm$ 0.45} & \textbf{91.46} {\scriptsize \color{gray} $\pm$ 2.10} & \textbf{93.60} {\scriptsize \color{gray} $\pm$ 0.38} & 93.39 {\scriptsize \color{gray} $\pm$ 0.91} & \textbf{89.32} \\
\bottomrule
\end{tabular}
}
\end{table*}

% ---------------------- Table GLUE Homo Full Participation -----------------------
\begin{table*}[h]
\centering
% \resizebox{\textwidth}{!}{
% \setlength{\tabcolsep}{3mm}  % 保持原压缩
\caption{Testing accuracy for fine-tuning the RoBERTa model on the GLUE benchmark under homogeneous client ranks with full participation.}
\label{table:main_results_robert_homo_full}
\resizebox{\textwidth}{!}{
\renewcommand{\arraystretch}{1}
\begin{tabular}{l|cccccc|c}
\toprule
\textbf{Method} & \textbf{QNLI} & \textbf{MRPC} & \textbf{CoLA} & \textbf{MNLI-M} & \textbf{MNLI-MM} & \textbf{QQP} & \textbf{Avg.} \\
\midrule
FedIT    & 90.02 {\scriptsize \color{gray} $\pm$ 0.46} & 70.30 {\scriptsize \color{gray} $\pm$ 0.39} & 75.16 {\scriptsize \color{gray} $\pm$ 0.15} & 82.40 {\scriptsize \color{gray} $\pm$ 0.44} & 82.19 {\scriptsize \color{gray} $\pm$ 0.33} & 80.98 {\scriptsize \color{gray} $\pm$ 1.05}   & 80.18  \\
FFA-LoRA & 91.42 {\scriptsize \color{gray} $\pm$ 0.19} & 73.20 {\scriptsize \color{gray} $\pm$ 0.42} & 74.16 {\scriptsize \color{gray} $\pm$ 0.45} & 84.79 {\scriptsize \color{gray} $\pm$ 0.16} & 84.78 {\scriptsize \color{gray} $\pm$ 0.16} & 81.89 {\scriptsize \color{gray} $\pm$ 1.05}   & 81.71 \\
RoLoRA & 93.10 {\scriptsize \color{gray} $\pm$ 0.46} & 73.86 {\scriptsize \color{gray} $\pm$ 1.26} & 73.93 {\scriptsize \color{gray} $\pm$ 0.29} & 85.76 {\scriptsize \color{gray} $\pm$ 0.43} & 85.84 {\scriptsize \color{gray} $\pm$ 0.53} & 81.25 {\scriptsize \color{gray} $\pm$ 0.38}  & 82.29\\
AS-LoRA & 92.46 {\scriptsize \color{gray} $\pm$ 1.03} & 72.44 {\scriptsize \color{gray} $\pm$ 0.67} & 75.04 {\scriptsize \color{gray} $\pm$ 0.94} & 82.79 {\scriptsize \color{gray} $\pm$ 0.33} & \textbf{86.76} {\scriptsize \color{gray} $\pm$ 0.35} & 80.95 {\scriptsize \color{gray} $\pm$ 1.67} & 81.74\\
FLoRA      & 91.41 {\scriptsize \color{gray} $\pm$ 0.35}  & 76.34 {\scriptsize \color{gray} $\pm$ 0.80} & \textbf{77.26} {\scriptsize \color{gray} $\pm$ 0.25} & 83.66 {\scriptsize \color{gray} $\pm$ 0.77} & 83.88 {\scriptsize \color{gray} $\pm$ 0.66} & 82.79 {\scriptsize \color{gray} $\pm$ 0.32}  & 82.56 \\
\midrule
\textbf{CFLoRA} & \textbf{93.93} {\scriptsize \color{gray} $\pm$ 0.34} & \textbf{80.83} {\scriptsize \color{gray} $\pm$ 0.85} & 76.74 {\scriptsize \color{gray} $\pm$ 0.46} & \textbf{86.54} {\scriptsize \color{gray} $\pm$ 0.40} & 86.71 {\scriptsize \color{gray} $\pm$ 0.26} & \textbf{84.15} {\scriptsize \color{gray} $\pm$ 0.53}   & \textbf{84.82} \\
\bottomrule
\end{tabular}
}
\end{table*}

\end{document}